\newtheorem{proposition}{Proposition}
\definecolor{deepgreen}{rgb}{0.0, 0.392, 0.0}
\providecommand{\R}{\mathbb{R}}
\newcommand{\eqqcolon}{\coloneqq}
\theoremstyle{plain}
\title{BSFA: Leveraging the Subspace Dichotomy to Accelerate Neural Network Training}
\author{
  Wenjie Zhou\textsuperscript{\rm 1,2}\thanks{Equal contribution},\;
  Bohan Wang\textsuperscript{\rm 3}\footnotemark[1],\;
  Wei Chen\textsuperscript{\rm 1,2}\thanks{Corresponding author},\;
  Xueqi Cheng\textsuperscript{\rm 1,2}\\
  \textsuperscript{\rm 1}State Key Laboratory of AI Safety, Institute of Computing Technology,\\  Chinese Academy of Sciences \\
  \textsuperscript{\rm 2}University of Chinese Academy of Sciences\\
  \textsuperscript{\rm 3}Alibaba Group\\
  \texttt{zj4323005@gmail.com, bhwangfy@gmail.com, \{chenwei2022, cxq\}@ict.ac.cn}
}
\begin{document}
\maketitle
\maketitle

\begin{abstract}
Recent studies \citep{gur2018gradient,song2024does, wen2024understanding} highlight a fundamental dichotomy in deep learning optimization: 
Although parameter updates along the top eigendirections of the loss Hessian (Dom-space) capture most of the update magnitude, they often contribute minimally to loss reduction. In contrast, updates in the orthogonal component (Bulk-space) have smaller magnitudes but drive most learning progress.
In this work, we further advance the understanding of this phenomenon and introduce the \textbf{Bulk-Space-Filtration-Accelerator (BSFA)}, a novel plug-and-play framework. BSFA accelerates training by differentially scaling update components projected onto these distinct subspaces, simultaneously enhancing stability by moderating updates in the dominant subspace and boosting convergence speed by amplifying those in the bulk-space.
To ensure BSFA is both practical and scalable for contemporary large models, we introduce two key innovations: an efficient estimator using Principal Component Analysis (PCA) on historical updates for fast subspace estimation, and a block-wise strategy that applies this estimation on a per-parameter-block basis. These designs make BSFA computationally tractable and highly effective.
We demonstrate BSFA's acceleration across various tasks, notably achieving approximately 2$\times$ speedup when pre-training LLaMA-72M on WikiText-103 and LLaMA-134M on OpenWebText compared to vanilla AdamW.

\end{abstract}    


\section{Introduction}
\label{sec:intro}

Deep learning has revolutionized artificial intelligence, achieving remarkable breakthroughs across domains such as computer vision \citep{he2016deep,dosovitskiy2020image,liu2021swin}, natural language processing (especially LLMs) \citep{vaswani2017attention,devlin2018bert,achiam2023gpt}, and healthcare \citep{miotto2018deep,liu2020deep,esteva2019guide}. At the heart of this success lies optimization - the indispensable engine driving the training of complex neural networks. By iteratively minimizing loss functions through algorithms like stochastic gradient descent (SGD) \citep{robbins1951stochastic} and its adaptive variants \citep{duchi2011adaptive,tieleman2012lecture,kingma2014adam}, optimization enables models to uncover intricate patterns in high-dimensional data. However, the ever-increasing scale of data and model complexity \citep{kaplan2020scaling,hoffmann2022training,molybog2023theory} has led to escalating training costs, motivating researchers to continuously innovate toward methods that can achieve high computational efficiency with reduced resource demands. This relentless pursuit of scalable and adaptive optimization techniques underscores their critical role in advancing the frontiers of deep learning.

Recent research has revealed that optimization landscapes in deep learning exhibit unique characteristics distinct from traditional machine learning objectives. For instance, \citet{zhang2020gradient} demonstrated that neural network training processes often exhibit dramatic variations in local smoothness positively correlated with gradient magnitudes. \citet{wu2018sgd,cohen2021gradient} identified the "edge of stability" phenomenon, where Hessian matrices implicitly adapt to optimization hyperparameters until oscillation emerges. \citet{zhang2024why} further discovered "Block Heterogeneity" in Transformer training, characterized by significant disparities in Hessian spectra across parameter blocks. These observations not only explain the superior performance of adaptive optimizers in certain architectures but also inspire specialized optimization algorithm designs \citep{roulet2024stepping,adam_mini_2024}.

In overparameterized neural networks, the loss landscape is highly anisotropic. Spectral analyses of the Hessian reveal two distinct components. The bulk of its eigenvalues lies near zero and corresponds to flat directions in parameter space, while a small set of large eigenvalues identifies sharp directions \citep{hochreiter1997flat, sagun2017empirical}. Extensive research \citep{pmlr-v97-ghorbani19b, Papyan2018TheFS} into the Hessian's prevalent low-rank structure further quantifies this: for $k$--class classification problems, there are typically around $k$ such dominant eigenvalues, while for Large Language Models \citep{zhang2024why, liu2023sophia}, the largest few tens of eigenvalues are generally significantly more prominent and are sufficient to define the dominant subspace. As a result, at each point in parameter space, the tangent space splits into a high-dimensional flat subspace (\textit{bulk subspace}) and a low-dimensional sharp subspace(\textit{dominant subspace}).

 Building on this geometric picture, most recently, a line of works \citep{gur2018gradient, song2024does, wen2024understanding} show that optimization dynamics in the dominant and bulk subspaces follow a clear dichotomy. Most update norm falls into the dominant subspace but contributes little to loss reduction. In contrast, the relatively small update projected onto the bulk subspace drives most of the loss descent (see Section \ref{sec:part1} for details). This discovery has been leveraged to explain the effectiveness of modern learning rate schedulers like Warm-Stable-Decay (WSD) \citep{hu2024minicpm}.

Building upon these observations, this work moves beyond phenomenological explanations to algorithmic innovation. Specifically, we address the fundamental question:
\begin{center}
\textit{Can we exploit the subspace structure in update to accelerate optimization convergence?}    
\end{center}

Our contributions are threefold:
\begin{itemize}
    \item [1.] We first extend the understanding of the subspace dichotomy in deep learning optimization. Our empirical analysis reveals that independently modulating update components within the dominant and bulk subspaces—derived from the Hessian Eigenspectrum—yields distinct effects: controlling update magnitudes in the dominant subspace primarily affects training stability, while updates within the bulk subspace predominantly influence convergence speed.
    
    \item [2.] Based on this insight, we propose the \textbf{Bulk-Space-Filtration-Accelerator (BSFA)}, a plug-and-play framework that differentially scales updates in these subspaces. Initial validation with exact Hessian information shows up to 4$\times$ ~ acceleration on training ResNet18 on CIFAR10 and DenseNet121 on CIFAR100. Then we introduce two key algorithmic enhancements to make BSFA practical and scalable: an efficient PCA-based estimator using historical updates for rapid dominant subspace approximation and a block-wise strategy applying this per parameter block. These render BSFA computationally efficient for large models.
    \item [3.] We validate the practical BSFA framework on large-scale Transformer models, demonstrating significant training acceleration. Notably, BSFA achieves approximately 2$\times$ speedup in pre-training LLaMA-72M on WikiText-103 and LLaMA-134M on OpenWebText, and in training ViT-Small on ImageNet-1k, compared to AdamW.
\end{itemize}

\section{Related work}
\label{sec:related}

In this section, we review existing literature on optimization behaviors unique to deep learning and acceleration techniques in this domain.

\paragraph{\textbf{Optimization behaviors specific to deep learning.}}
While classical machine learning tasks \cite{platt1998sequential,freund1996experiments} typically feature convex, smooth landscapes with static local properties, deep learning optimization landscapes are inherently more complex. These landscapes are often characterized by non-convexity \cite{li2018visualizing} and non-smoothness \cite{zhang2020gradient}, creating seemingly chaotic optimization dynamics. Recent studies, however, have revealed intriguing structural patterns within this complexity. For instance, \cite{zhang2020gradient} observed in NLP tasks that the spectral norm of the Hessian matrix exhibits a positive correlation with gradient norms, providing theoretical justification for the effectiveness of gradient clipping techniques. In transformer-based architectures, \cite{zhang2024why} identified heavy-tailed distributions in parameter updates, while \cite{pmlr-v235-zhu24h} demonstrated that large learning rates induce oscillations in subtle classification rule learning while preserving deterministic feature acquisition. 
A particularly influential observation across multiple studies \cite{jastrzebski2018width,wu2018sgd, jastrzebski2020break,cohen2021gradient} is the "edge of stability" phenomenon, where loss sharpness increases until reaching an oscillation threshold determined by optimization hyperparameters.

Most relevant to our work are the findings of \cite{song2024does, wen2024understanding}, who demonstrated that parameter updates in deep learning can be decomposed into two distinct subspaces: one accounting for the majority of update magnitude but minimal loss reduction, and another comprising smaller updates that drive most of the loss descent. This dichotomy forms the foundation for our proposed acceleration methodology.

\paragraph{\textbf{Acceleration techniques in deep learning.}} Since the introduction of stochastic gradient descent (SGD) \cite{robbins1951stochastic}, researchers have persistently sought improvements to optimization efficiency. Early innovations like momentum \cite{polyak1964some} enhanced convergence by incorporating historical gradient information. The advent of adaptive learning rate methods marked a pivotal advancement, with algorithms like AdaGrad \cite{duchi2011adaptive}, RMSProp \cite{tieleman2012lecture}, and Adam \cite{kingma2014adam} addressing scale variations across parameters. Notably, Adam's dominance in modern practice stems from its synthesis of momentum and adaptive step-size mechanisms. Recent efforts have explored more radical departures, including sign-based updates \cite{chen2023symbolic}, matrix-structured optimization \cite{jordan2024muon}, and second-order approximations \cite{liu2023sophia}, each targeting different aspects of the optimization geometry.


In contrast to these approaches, our methodology draws inspiration from the intrinsic structure of parameter updates. By explicitly leveraging the subspace decomposition phenomenon observed in \cite{song2024does}, we propose a novel acceleration framework that strategically prioritizes critical update components. 
Relatedly, IRE \cite{NEURIPS2024_d712c862} accelerates implicit regularization by separating flat vs.\ sharp components using a diagonal Fisher/Gauss--Newton estimate and boosting the effective step size only on the flatter coordinates. 
Meanwhile, Blockwise LR \cite{wang2025sharpnessdisparityprincipletransformers} exploits sharpness disparities across Transformer block types and assigns larger learning rates to flatter blocks while keeping the sharpest (e.g., Norm) at the base LR; in contrast, our method uses the Hessian-eigenvector structure observed by \cite{song2024does} more explicitly rather than diagonal/blockwise heuristics.    
\section{Subspace Dichotomy in Training Dynamics}

\label{sec:part1}

In this section, we first briefly introduce the subspace dichotomy phenomenon \citep{gur2018gradient, song2024does, wen2024understanding} and then present our main findings.

 Let \(L(\theta):\mathbb{R}^p\to\mathbb{R}\) be the loss function of the neural network parameterized by $\theta$. For \(\theta\in\mathbb{R}^p\), let \(H(\theta)=\nabla^2L(\theta)\in\mathbb{R}^{p\times p}\), with eigenvalues \(\lambda_1(\theta)\ge\cdots\ge\lambda_p(\theta)\) and corresponding orthonormal eigenvectors \(u_1(\theta),\dots,u_p(\theta)\). With these notations, \cite{gur2018gradient, song2024does} defines the top-\(k\) \textbf{dominant subspace} as \(S_k(\theta)=\mathrm{span}\{u_1(\theta),\dots,u_k(\theta)\}\), and the \textbf{bulk subspace} as its orthogonal complement \(S_k^\perp(\theta)\). The corresponding projectors to the top-\(k\) \text{dominant subspace} and the \text{bulk subspace} are respectively denoted as $P_k(\theta)$ and $P_k^\perp(\theta)$.

The key finding of   \citet{song2024does} is \textit{learning happens in the Bulk Subspace:} Consider a SGD training process \(\theta_{t+1}=\theta_t-\eta g_t\) (where $g_t$ denotes gradient at step $t$). If each update is projected onto the dominant subspace, i.e. \(\theta_{t+1}=\theta_t-\eta P_k(\theta_t)g_t\), it fails to decrease the training loss further. Conversely, projecting each update onto the bulk subspace, \(\theta_{t+1}=\theta_t-\eta P_k^\perp(\theta_t)g_t\), is still capable of driving down the training loss. We validate this observation in Figure~\ref{fig:minhak}, where we train a 2-layer Transformer (2M parameters) on the SST2 dataset using SGD; details are provided in Appendix~\ref{appn:par1}.

\begin{figure}[htbp]
  \includegraphics[width=\columnwidth]{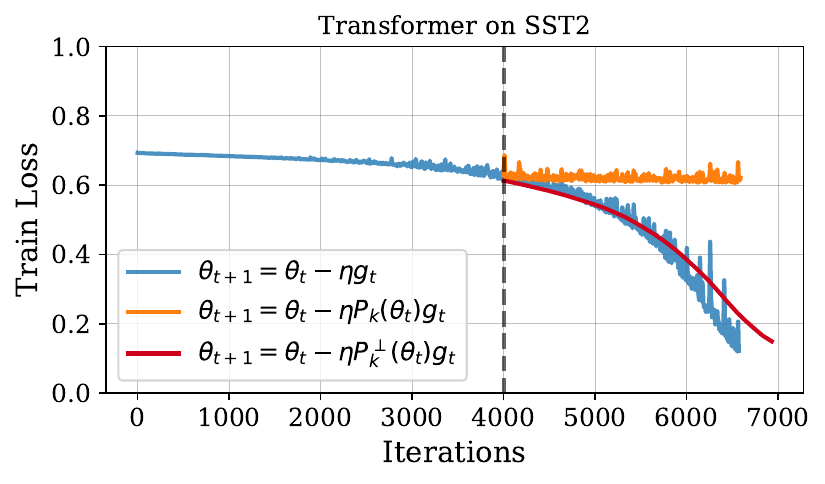}
  \caption{SGD in dominant and bulk subspace ($k=2$).}
    \label{fig:minhak}
\end{figure}


\begin{figure*}[t]
  \centering
  \begin{subfigure}[t]{0.48\linewidth}
    \includegraphics[width=\linewidth]{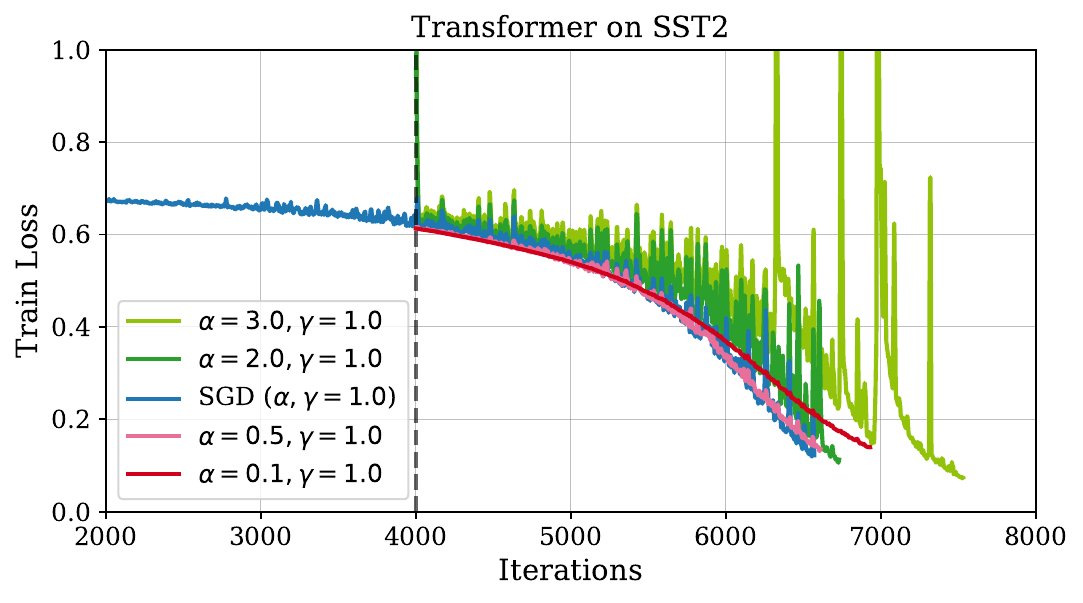}
    \caption{Varying the dominant subspace update magnitude.}
    \label{fig:demonstra_a}
  \end{subfigure}
  \hfill
  \begin{subfigure}[t]{0.48\linewidth}
    \includegraphics[width=\linewidth]{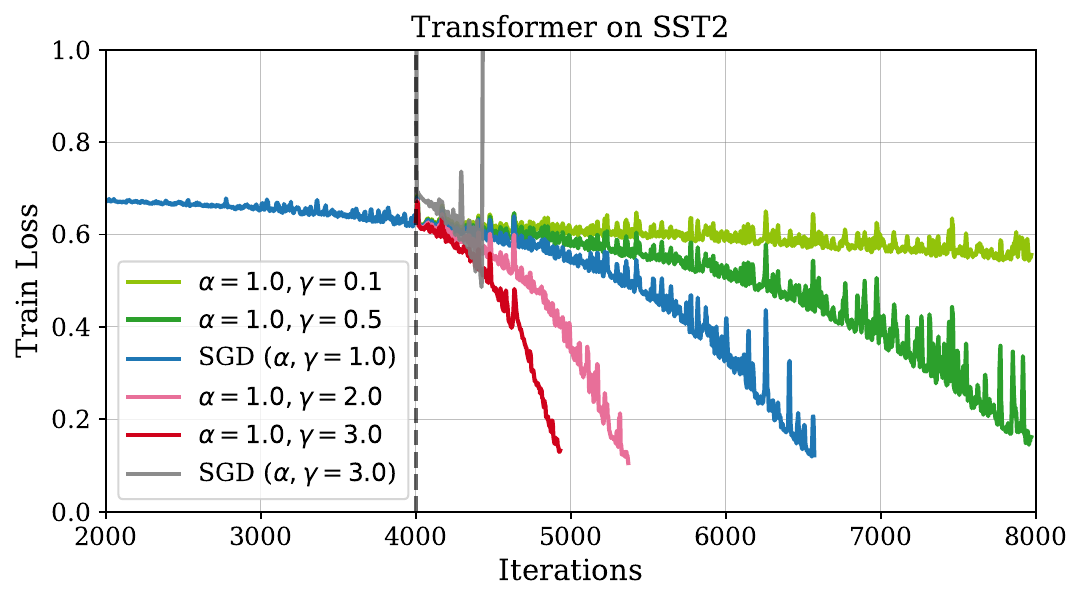}
    \caption{Varying the bulk subspace update magnitude.}
    \label{fig:demonstra_b}
  \end{subfigure}
  \caption{\textbf{Subspace‐specific Update Scaling.} We follow the setting of Figure~\ref{fig:minhak}(SGD), and after 4000 steps, we introduce two scaling factors, $\alpha$ and $\gamma$, to independently modulate the update magnitudes in the dominant and bulk subspaces. Training is terminated once the model's training accuracy reaches 0.99.}
  \label{fig:demonstra}
\end{figure*}

Building on this observation, a natural question arises: how do updates projected onto these two subspaces affect the training process? To tease apart these two effects, we introduce a simple, two-parameter projector \(\mathcal{P}_{\alpha,\gamma}(\theta)\) that independently scales the updates in the dominant and bulk subspaces. Specifically, let
\[
\mathcal{P}_{\alpha,\gamma}(\theta) \;=\; \alpha\,P_k(\theta)\;+\;\gamma\,P_k^\perp(\theta),
\]
where \(P_k(\theta)\) is the projector onto the dominant subspace and \(P_k^\perp(\theta)=I-P_k(\theta)\), as defined above. Then, we conduct experiments using this projector to modify the training iteration: \(
\theta_{t+1} = \theta_t - \eta\,\mathcal{P}_{\alpha,\gamma}(\theta_t)\,g_t,\) scaling the dominant and bulk components of the update by \(\alpha\) and \(\gamma\). Following the setup of Figure~\ref{fig:minhak}, we sweep \(\alpha\in\{0.1,0.5,1,2,3\}\) and \(\gamma\in\{0.1,0.5,1,2,3\}\) at step $4000$, we then compare these projected training runs with a well-tuned SGD baseline, and the results are presented in Figure~\ref{fig:demonstra}. We summarize two notable observations as follows:
\vspace{-5pt}
\begin{itemize}[leftmargin=10pt]
  \item \textbf{Dom‐Update and Training Stability:} In Figure~\ref{fig:demonstra_a}, with the bulk subspace learning rate held constant, varying the dominant subspace scaling factor $\alpha$ does not significantly alter the overall convergence rate. However, larger values of $\alpha$ (\textcolor{deepgreen}{green curves}) induce more frequent and pronounced loss spikes during training and induce instability, whereas smaller values of $\alpha$ (\textcolor{red}{red curves}) mitigate these spikes and promote smoother convergence.
  \vspace{-5pt}
  \item \textbf{Bulk‐Update and Training Speed:} In Figure~\ref{fig:demonstra_b}, with the dominant subspace learning rate fixed, varying the bulk subspace scaling factor $\gamma$ has a marked impact on convergence speed. When chosen appropriately, higher values of $\gamma=3$ (\textcolor{red}{red curves}) can substantially accelerate training; however, simply increasing the overall update by 3 times (which is equal to setting $\alpha,\gamma=3$) (\textcolor{gray}{grey curve}) would lead to divergence.
\end{itemize}
\vspace{-5pt}
These findings highlight a functional dichotomy: updates in the dominant subspace predominantly govern training stability, with smaller magnitudes promoting smoother convergence. Conversely, updates in the bulk subspace are primary drivers of convergence speed, where appropriately scaled larger magnitudes can yield substantial acceleration. Such distinct behaviors strongly motivate a decoupled control of update components within these respective subspaces to enhance overall training performance.
 
\section{BSFA: Accelerating Training through Amplify Bulk Update}
\label{sec:part2}

This section details our methodology for leveraging the observed subspace dichotomy to accelerate neural network training. We first introduce the Bulk‐Space‐Filtration‐Accelerator (BSFA) framework in Section \ref{sec:bsfa_frame}. Subsequently, Section \ref{sec:pca} presents an efficient PCA‐based estimator for approximating dominant eigendirections. Finally, Section \ref{sec:block-wise} describes a block‐wise BSFA strategy to enhance scalability for large models by exploiting the Hessian's approximate block‐diagonal structure.

\subsection{BSFA Framework}
\label{sec:bsfa_frame}

\begin{figure*}[htbp]
  \centering
  \begin{subfigure}[t]{0.24\linewidth}
    \centering
    \includegraphics[width=\linewidth]{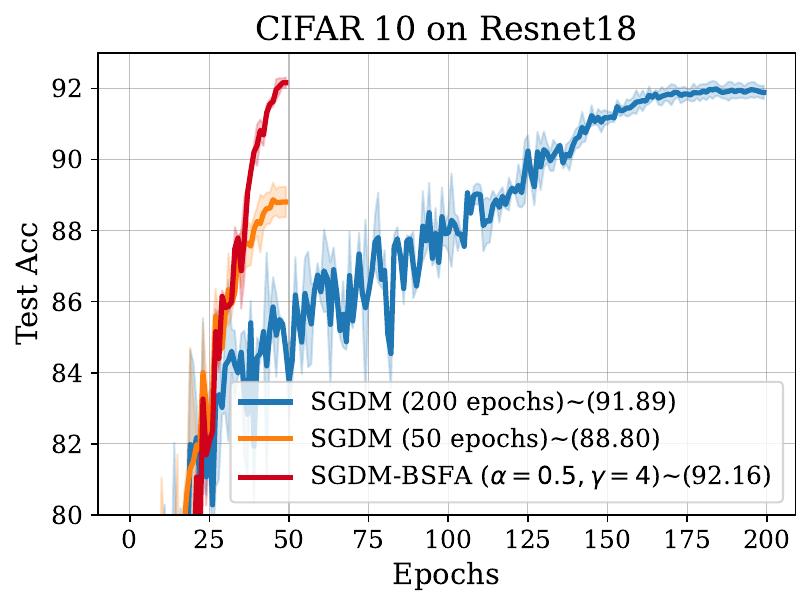}
    \caption{CIFAR10 Accuracy}
    \label{fig:cifar10}
  \end{subfigure}
  \hfill
  \begin{subfigure}[t]{0.24\linewidth}
    \centering
    \includegraphics[width=\linewidth]{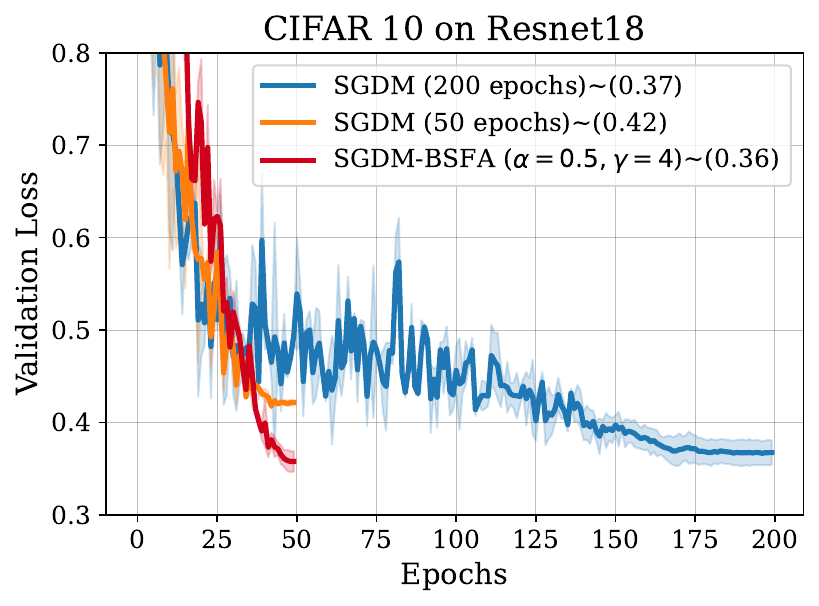}
    \caption{CIFAR10 Validation Loss}
    \label{fig:cifar10_tloss}
  \end{subfigure}
  \hfill
  \begin{subfigure}[t]{0.24\linewidth}
    \centering
    \includegraphics[width=\linewidth]{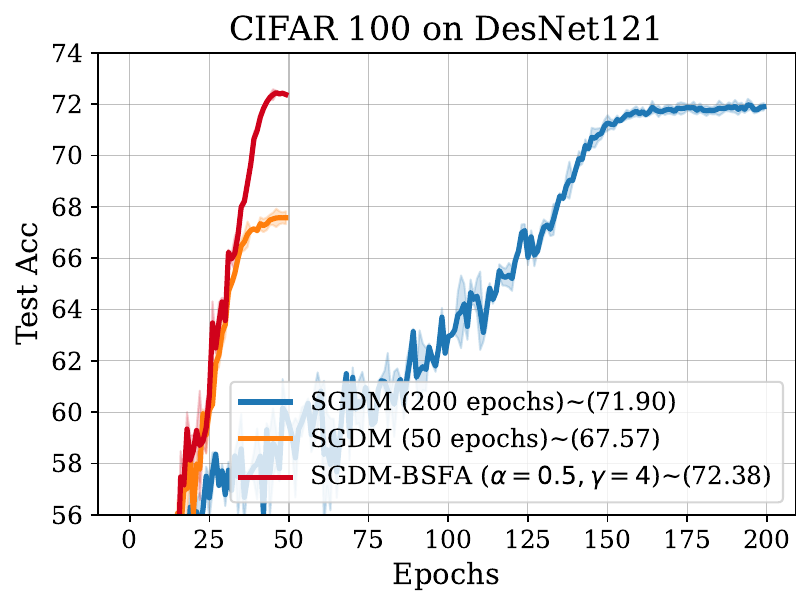}
    \caption{CIFAR100 Accuracy}
    \label{fig:cifar100}
  \end{subfigure}
  \hfill
  \begin{subfigure}[t]{0.24\linewidth}
    \centering
    \includegraphics[width=\linewidth]{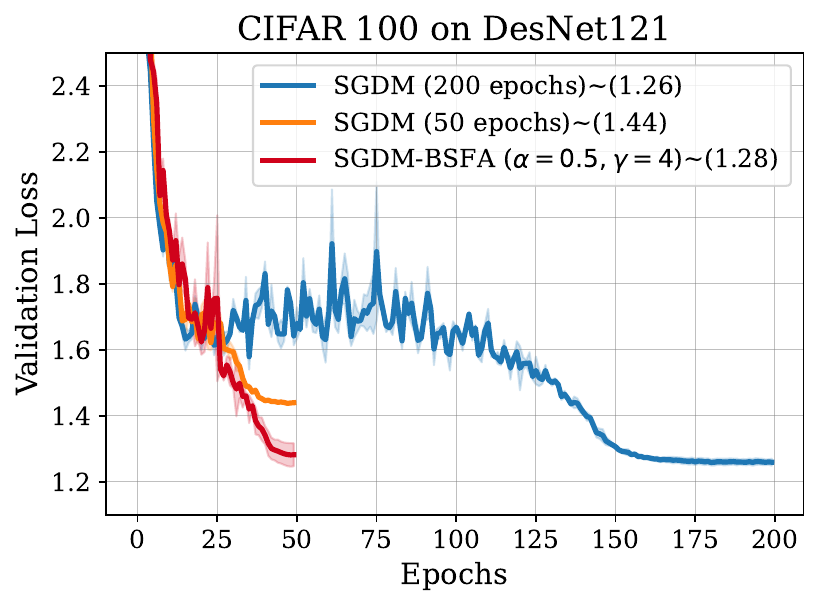}
    \caption{CIFAR100 Validation Loss}
    \label{fig:cifar100_tloss}
  \end{subfigure}
  
  \caption{\textbf{SGDM with BSFA ($\alpha=0.5, \gamma=4$) achieves a $4\times$ acceleration compared to tuned SGDM.} In both experiments, we employ a cosine learning‐rate schedule with different total epoch counts. In each case, BSFA consistently achieves lower validation loss at identical epoch checkpoints and matches the baseline’s 200‐epoch test accuracy in just 50 epochs. We provide training results for three random seeds.}
  \label{fig:cifar10&cifar100}
\end{figure*}

Recalling Figure~\ref{fig:demonstra}, updates in the dominant and bulk subspaces contribute differently to the training dynamics. Building on this insight, we propose Bulk‐Space‐Filtration‐Accelerator (BSFA), a plug‐and‐play acceleration framework that can be integrated with any optimizer. Concretely, given a generic update \(
\theta_{t+1} \;=\;\theta_t + v_t,
\) BSFA modifies it to
\[
\theta_{t+1}
= \theta_t + \eta\,\mathcal{P}_{\alpha,\gamma}(\theta_t)\,v_t,
\]
where \(\mathcal{P}_{\alpha,\gamma}(\theta_t)\) denote the projector at iteration \(t\), it scales the base optimizer’s update in the dominant subspace by \(\alpha\) and in the bulk subspace by \(\gamma\). In practice, as illustrated in Figure~\ref{fig:demonstra}, we typically choose \(\alpha<1\) to enhance training stability and setting \(\gamma>1\) generally promotes a faster decrease in the loss. We recompute \(\mathcal{P}_{\alpha,\gamma}(\theta_t)\) every \(T\) steps, with \(T=10\) by default. We summarize the BSFA framework in Algorithm~\ref{algo:BSFA}.

\begin{algorithm}
\caption{Bulk-Space-Filtration-Accelerator (\textbf{BSFA})}
\label{algo:BSFA}
\begin{algorithmic}[1]
\State \textbf{Input:}  Current iteration step $t$, Update $\boldsymbol{v}_t \in \mathbb{R}^{p}$ of base optimizer at step $t$, Dominant subspace's rank $k$, Interval $T$, Projection matrix estimator, {Domspace scaler $\alpha$, Bulkspace scaler $\gamma$}.
\State BSFA Projector $\mathcal{P}_{\alpha,\gamma} \gets \boldsymbol{I}_{k \times d}$
\If{$t \mod T = 0$ \textbf{and} $t>0$} 
    \State {$\mathcal{P}_{\alpha,\gamma} \gets \text{Estimator}(\alpha, \gamma, k)$}
\EndIf
\State $\boldsymbol{v}^{\prime}_{t} \gets \mathcal{P}_{\alpha,\gamma} \boldsymbol{v}_t$
\State \textbf{Return} $\boldsymbol{v}^{\prime}_{t}$
\end{algorithmic}
\end{algorithm}

By definition, the projector $\mathcal{P}{\alpha,\gamma}$ needs to be constructed by approximating the top $k$ Hessian eigenvectors ${u_i}{i=1}^k$. To obtain these, a common approach is the Lanczos method (see Appendix~\ref{appen:lanczos} for details).   Leveraging this off‐the‐shelf routine, we define our \textbf{Lanczos‐based Projector Estimator (LPE)} as a direct application of Lanczos to construct the following projector:

LPE is applied in Figures \ref{fig:minhak} and \ref{fig:demonstra}, yielding highly accurate estimates. Pseudocode for LPE is given in Appendix~\ref{appen:lanczos}, Algorithm~\ref{algo:lanczos}.

To validate the BSFA framework with its Lanczos-based Projector Estimator (LPE), we conducted experiments on ResNet18/CIFAR10 and DenseNet121/CIFAR100 (Figure~\ref{fig:cifar10&cifar100}) by integrating BSFA (using $\alpha=0.5, \gamma=4$) with well-tuned SGDM (further experiment details are provided in Appendix~\ref{appn:part2}). 
These experiments demonstrated that BSFA enables SGDM to achieve higher terminal test accuracy within the same number of training epochs. Furthermore, it provides a significant 4$\times$ acceleration, allowing SGDM to reach the baseline's 200-epoch test accuracy and validation loss in just 50 epochs.

\subsection{Fast Dom-Subspace Estimation via Principal Component Analysis}
\label{sec:pca}

\begin{figure*}[t]
  \centering
    \begin{subfigure}[c]{0.32\linewidth}
    \centering
    \includegraphics[width=\linewidth]{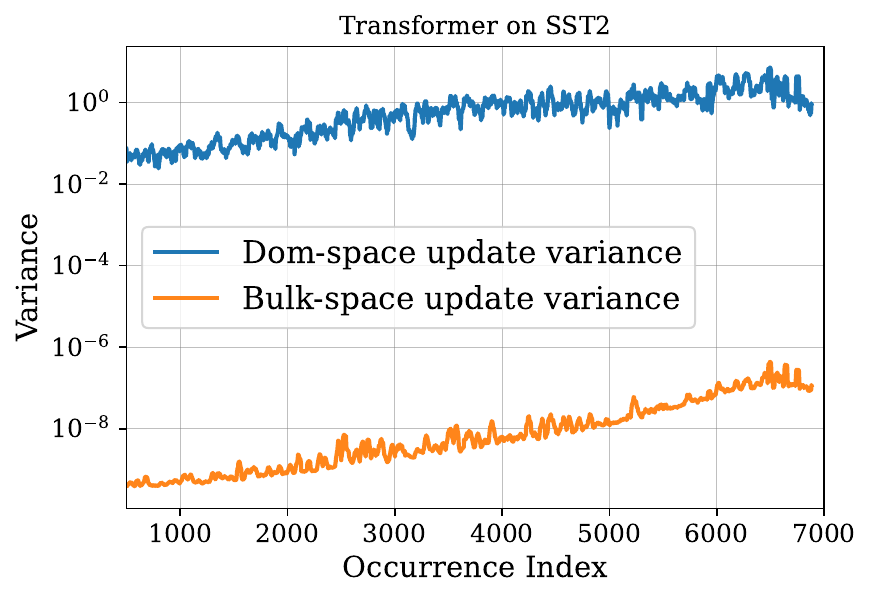}
    \caption{Variance of updates in two subspaces.}
    \label{fig:pca_var}
  \end{subfigure}
    \hfill
  \begin{subfigure}[c]{0.32\linewidth}
    \centering
    \includegraphics[width=\linewidth]{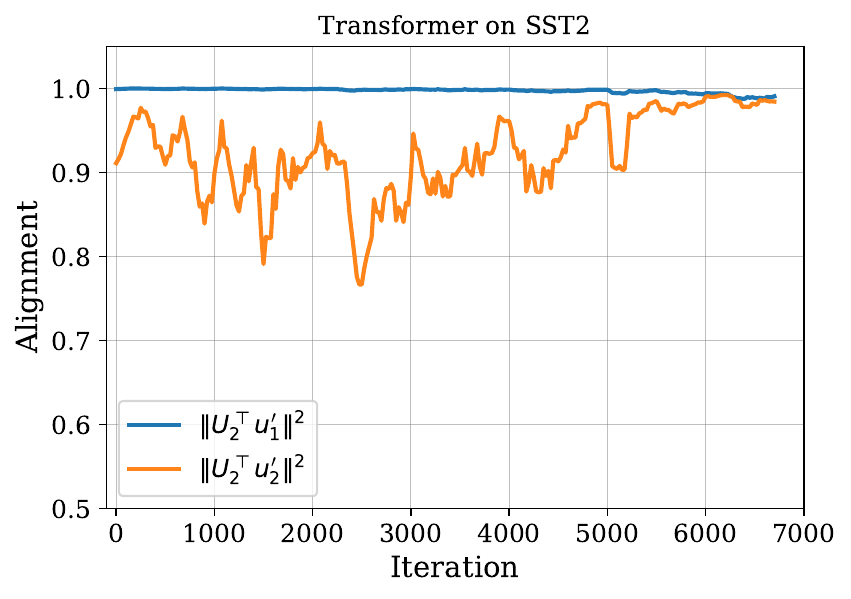}
    \caption{Squared projection norms.}
    \label{fig:pca1}
  \end{subfigure}
  \hfill
  \begin{subfigure}[c]{0.32\linewidth}
    \centering
    \includegraphics[width=\linewidth]{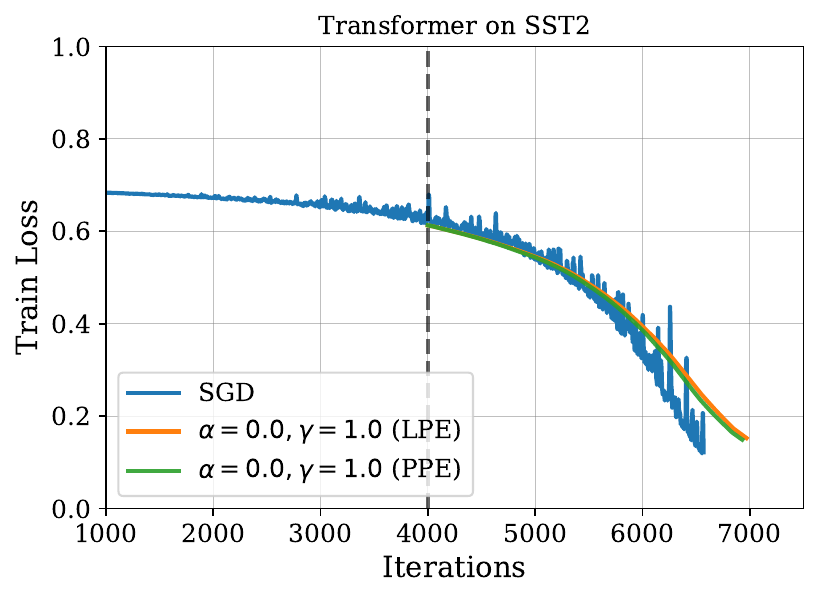}
    \caption{Training loss comparison.}
    \label{fig:pca2}
  \end{subfigure}

  \caption{(a) The Variance of updates in Dom-space is much higher than Bulk-space. (b) Alignment of Dom-space estimated by PPE and LPE is close to 1. (c) PPE and LPE exhibit comparable efficacy.}
  \label{fig:combined}
\end{figure*}

The previous section validated BSFA’s substantial acceleration benefits. However, a key challenge of LPE estimator is the high computational cost of estimating the top eigenvectors using the Lanczos method, which involves multiple forward and backward propagation steps, rendering the top-$k$ eigenvector computation both prohibitively expensive and very time-consuming. 
Therefore, we next explore how this computational time can be significantly reduced by efficiently approximating the dominant eigendirections during training.

\paragraph{\textbf{Key insight: Oscillatory Dynamics in Dominant Subspace.}}
To address this, we first draw intuition from the oscillatory dynamics of historical updates in the dominant subspaces. We present the following proposition to provide insight that PCA can capture these top eigenvectors over historical updates.
\begin{proposition}[Top eigenspace recovery via PCA]
\label{prop:pca}
Let $H\in\R^{p\times p}$ be symmetric positive–semidefinite with simple eigenvalues $\lambda_1>\dots>\lambda_k>\lambda_{k+1}$ and suppose all trailing eigenvalues are equal, i.e.\ $\lambda_{k+1}=\dots=\lambda_d\eqqcolon\lambda_{\mathrm{tail}}\ge0$.  
Pick a stepsize $\eta>0$ such that $\eta\lambda_k>1,\;0<\eta\lambda_{\mathrm{tail}}<1$, and $\eta(\lambda_k+\lambda_{\mathrm{tail}})>2$.  
Run gradient descent $\theta_{s+1}=\theta_s-\eta H\theta_s$ from any $x_0$ whose first $k$ eigencoordinates are non-zero, write gradient $g_s=H\theta_s$, and form $G_t=[g_t,\dots,g_{t+l-1}]$ with window length $l\ge k$ and $l>1$.  
Then, as $t\to\infty$, the $k$-dimensional principal subspace identified by PCA of the gradient matrix $G_t$ converges to the target eigenspace $S_k=\mathrm{span}\{v_1,\dots,v_k\}$.
\end{proposition}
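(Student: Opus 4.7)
The plan is to work in the eigenbasis $\{v_i\}_{i=1}^d$ of $H$ and analyze the singular value decomposition of $G_t$ directly. Expanding $\theta_s = \sum_i c_s^{(i)} v_i$ turns the recursion into $c_s^{(i)} = \mu_i^s c_0^{(i)}$ with $\mu_i := 1 - \eta\lambda_i$, so $g_s = H\theta_s = \sum_i \lambda_i c_0^{(i)} \mu_i^s v_i$. The stepsize hypotheses give three spectral facts I would use throughout: $|\mu_i| = \eta\lambda_i - 1 > 1$ for $i \le k$ (top modes blow up with alternating sign), $\mu_i = 1 - \eta\lambda_{\mathrm{tail}} \in [0,1)$ for $i > k$ (tail modes stay bounded), and the strict inequality $\eta(\lambda_k + \lambda_{\mathrm{tail}}) > 2$ forces $|\mu_k| > |\mu_{\mathrm{tail}}|$. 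Simplicity of $\lambda_1,\dots,\lambda_k$ makes the top $\mu_i$ pairwise distinct, which will be essential for a Vandermonde argument later.

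The core idea is to split $G_t = P_k G_t + P_k^\perp G_t$ and treat the two pieces separately. Expressed in the basis $\{v_i\}_{i=1}^k$, the $(i,j)$-entry of $P_k G_t$ is $\lambda_i c_0^{(i)} \mu_i^{t+j}$, which factors cleanly as $D_t V^{\!\top}$, where $D_t = \operatorname{diag}\bigl(\lambda_i c_0^{(i)} \mu_i^t\bigr)_{i=1}^k$ and $V \in \mathbb{R}^{l \times k}$ is the Vandermonde matrix $V_{ji} = \mu_i^j$. Since $l \ge k$ and the $\mu_i$ are distinct, $V$ has rank $k$; since the $c_0^{(i)}$ are non-zero by hypothesis and $\lambda_i > 0$, $D_t$ is invertible. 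Hence $P_k G_t$ has column span exactly $S_k$ for every $t$, and combining the smallest diagonal entry of $D_t$ (which scales as $|\mu_k|^t$ for large $t$) with $\sigma_k(V) > 0$ yields a lower bound $\sigma_k(P_k G_t) \ge c\,|\mu_k|^t$ with a $t$-independent constant $c > 0$.

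The tail is easy because all tail eigenvalues coincide with $\lambda_{\mathrm{tail}}$: $P_k^\perp g_s = \lambda_{\mathrm{tail}}\,\mu_{\mathrm{tail}}^s\,w$ with $w := \sum_{i>k} c_0^{(i)} v_i$, so $P_k^\perp G_t$ is rank one with operator norm $O(|\mu_{\mathrm{tail}}|^t)$. Therefore the noise-to-signal ratio $\|P_k^\perp G_t\|_2 / \sigma_k(P_k G_t)$ is bounded by $O\bigl((|\mu_{\mathrm{tail}}|/|\mu_k|)^t\bigr)$, which vanishes as $t \to \infty$. A direct application of Wedin's $\sin\Theta$ theorem to the perturbation $G_t = P_k G_t + P_k^\perp G_t$ then yields that the top-$k$ left singular subspace of $G_t$, i.e.\ the principal subspace returned by PCA, converges to $S_k$. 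The step I expect to be the main obstacle is the Vandermonde lower bound: producing $\sigma_k(P_k G_t) \gtrsim |\mu_k|^t$ with a genuinely $t$-independent constant requires carefully separating the diagonal blow-up from the Vandermonde conditioning, and one must then verify that this lower bound still dominates the tail perturbation even though the ambient matrix $G_t$ itself has unbounded norm.
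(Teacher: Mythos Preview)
Your proposal is correct and follows the same skeleton as the paper: expand in the eigenbasis, set $\mu_i = 1 - \eta\lambda_i$, use the third stepsize condition to get $|\mu_k| > |\mu_{\mathrm{tail}}|$, exhibit a diagonal-times-Vandermonde factorization, and finish with a $\sin\Theta$ theorem. The paper passes to $G_t G_t^\top = V M^{(t)} V^\top$, blocks $M^{(t)}$ into a top-$k$ and a tail part, and applies Davis--Kahan on the symmetric problem; you split $G_t = P_k G_t + P_k^\perp G_t$ directly and apply Wedin on the singular-value problem. Your execution is arguably a bit tighter in two respects: the explicit Vandermonde rank argument delivers a clean $t$-independent constant in $\sigma_k(P_k G_t) \ge c\,|\mu_k|^t$ (the paper only states $\|M_{11}^{(t)}\|_2 \ge C_1|\mu_k|^{2t}$, which by itself bounds the top, not the $k$-th, eigenvalue of the block), and you notice that the equal-tail assumption makes $P_k^\perp G_t$ rank one, which the paper does not exploit.

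One harmless slip: the claim $|\mu_i| > 1$ for $i \le k$ does not follow from the hypotheses (take $\eta = 1$, $\lambda_k = 1.5$, $\lambda_{\mathrm{tail}} = 0.7$: all three conditions hold yet $|\mu_k| = 0.5$). You never actually use it; only $|\mu_k| > |\mu_{\mathrm{tail}}|$ matters, and that you derive correctly. Your worry about the unbounded ambient norm is also unfounded: Wedin needs only $\|P_k^\perp G_t\|$ and the gap $\sigma_k(P_k G_t) - \sigma_{k+1}(G_t)$, and Weyl's inequality gives $\sigma_{k+1}(G_t) \le \sigma_{k+1}(P_k G_t) + \|P_k^\perp G_t\| = \|P_k^\perp G_t\|$, so for large $t$ the gap is at least $\tfrac{c}{2}|\mu_k|^t$ and the argument goes through.
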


Proposition \ref{prop:pca} indicate that the top-$k$ eigenvectors of loss Hessian can be recovered by applying PCA to a list of recent gradients: directions with larger eigenvalues oscillate more strongly and thus dominate the principal components (proof is in Appendix \ref{appen:proof}). In Figure \ref{fig:pca_var}, we validate this oscillatory behavior on a Transformer trained on SST2 (same setup as Figure~\ref{fig:demonstra}) and, for each update, project the most recent 30 gradient vectors onto the estimated dominant subspace and its orthogonal complement. We then compute the sample variance of these projections and observe that the Variance in the dominant subspace far exceeds that in the bulk subspace, confirming that PCA on gradient histories effectively isolates the Hessian's leading eigenspace.

Motivated by Proposition~\ref{prop:pca} and the variance separation phenomenon illustrated in Figure~\ref{fig:pca_var}, we introduce our\textbf{ PCA‐based projector estimator (PPE)} in Algorithm~\ref{algo:pca}. The estimator maintains a fixed‐length queue storing the most recent \(l\) updates(with $l>k$) and performs PCA on the queued updates, and retains the top \(k\) principal components.

\begin{algorithm}
\caption{PCA‐based Projector Estimator (PPE)}
\label{algo:pca}
\begin{algorithmic}[1]
\State \textbf{Input:} Historical updates matrix \(\boldsymbol{V} = [\boldsymbol{v}_1, \dots, \boldsymbol{v}_l] \), number of principal components \(k\), domain scaler \(\alpha\), bulk scaler \(\gamma\)

\State \( \boldsymbol{U}_k  \gets \mathrm{PCA}(\boldsymbol{V}, k)\)
\State $\mathcal{P}_{\alpha,\gamma} \gets \alpha\,\boldsymbol{U}_k\, \boldsymbol{U}_k^\top + \gamma\,(\boldsymbol{I}-\boldsymbol{U}_k\, \boldsymbol{U}_k^\top)$
\State \textbf{Return:} \(\mathcal{P}_{\alpha,\gamma}\)
\end{algorithmic}
\end{algorithm}

To validate the efficacy of PPE, we train a Transformer on SST2 ($k=2$ for this binary classification task) following the setup in Figure~\ref{fig:minhak}, and compare the two-dimensional dominant subspaces estimated by PPE ($U=(u_1,u_2)$) and LPE ($U'=(u'_1,u'_2)$) in Figure~\ref{fig:pca1}. We compute the squared projection norms $\|U^\top u'_i\|^2$ for $i=1,2$ at multiple checkpoints. As shown in Figure~\ref{fig:pca1}, the leading eigenvector attains a projection norm virtually equal to 1, and the second eigenvector remains close to 1 on average. Moreover, Figure~\ref{fig:pca2} demonstrates that BSFA with either estimator yields nearly identical training trajectories and final accuracies. Table~\ref{tab:wall_trans} reports the average per-update time on an RTX4090, showing a 99.84\% speed-up with PPE over LPE. Additional runtime comparisons are provided in Appendix~\ref{appen:time_compare}.

\begin{table}[htbp]
  \centering
  \begin{tabular}{@{}lc@{}}
    \toprule
    Estimator & Time \\
    \midrule
    LPE       & 10.28s   \\
    PPE       & 0.12s  \textbf{($\bm{\downarrow~99.84\%}$)}   \\
    \bottomrule
  \end{tabular}
  \caption{Wall-clock time on 1 RTX4090.}
  \label{tab:wall_trans}
\end{table}

\subsection{Block-wise Strategy for Enhancing BSFA Scalability}
\label{sec:block-wise}

In the preceding sections, we introduced the BSFA framework and presented our PCA estimator, which delivers substantial acceleration in approximating the dominant subspace and thus renders BSFA far more practical. However, directly applying this PCA estimator to large-scale models' entire, high-dimensional parameter vector presents a significant performance bottleneck: the essential SVD operation becomes extremely memory-intensive and computationally slow when processing such massive, monolithic parameter vectors.

\paragraph{\textbf{Leveraging Block‐Diagonal Hessian Structure.}}
To overcome the scalability challenges posed by extremely high‐dimensional parameter vectors, we leverage the intuition that the loss Hessian in deep neural networks is approximately block‐diagonal. Extensive empirical studies have demonstrated that, across architectures—including modern transformers—the Hessian naturally decomposes into independent blocks, each with its eigenvalue spectrum \citep{collobert2004large, roux2007topmoumoute, martens2015optimizing, zhang2024why}.

Motivated by this structure, we introduce the \textbf{Block‐wise PCA‐based Projector Estimator (BPPE)}, detailed in Algorithm~\ref{algo:block_pca}. BPPE constructs a block-diagonal projector by partitioning the full parameter vector into \(B\) disjoint subvectors (typically via PyTorch's default block‐wise segmentation). For each block, our PCA estimator is independently applied to its historical parameter updates to extract the local dominant subspace and form a block-specific projector. In the context of large language models, BPPE strategically excludes the input and output (embedding) layers from these subspace operations—applying only the base optimizer to these large, sparse components—a practice consistent with optimizer designs like Adam‐Mini \citep{adam_mini_2024} and Muon \citep{liu2025muonscalablellmtraining, jordan2024muon}.

\begin{figure*}[t]
  \centering
  \begin{subfigure}[b]{0.32\textwidth}
    \centering
    \includegraphics[width=\linewidth]{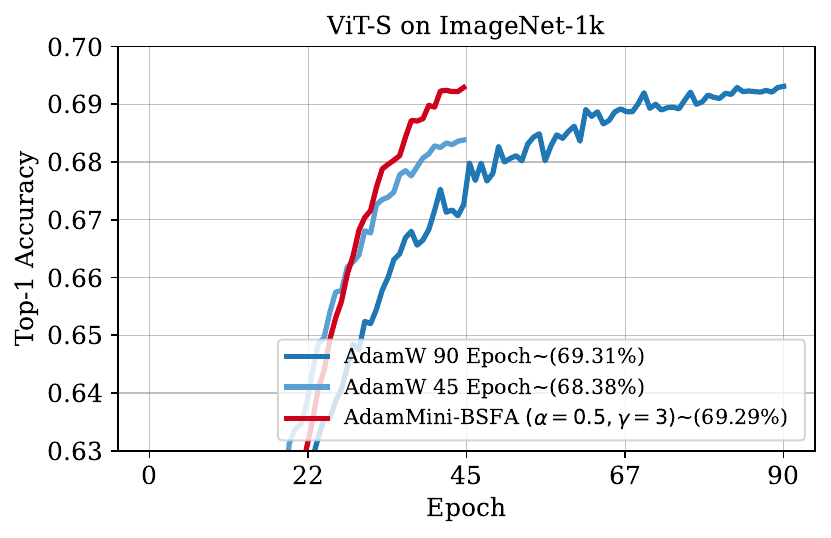}
    \caption{ViT-Small on Imagenet-1k}
    \label{fig:main_results_a}
  \end{subfigure}\hspace{0.01\textwidth}%
  \begin{subfigure}[b]{0.32\textwidth}
    \centering
    \includegraphics[width=\linewidth]{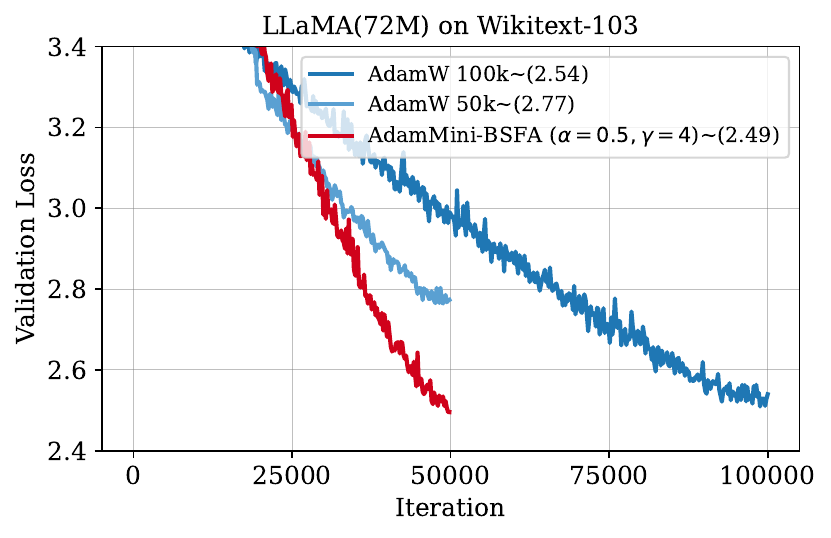}
    \caption{LLaMA(72M) on WikiText-103}
    \label{fig:main_results_b}
  \end{subfigure}\hspace{0.01\textwidth}%
  \begin{subfigure}[b]{0.32\textwidth}
    \centering
    \includegraphics[width=\linewidth]{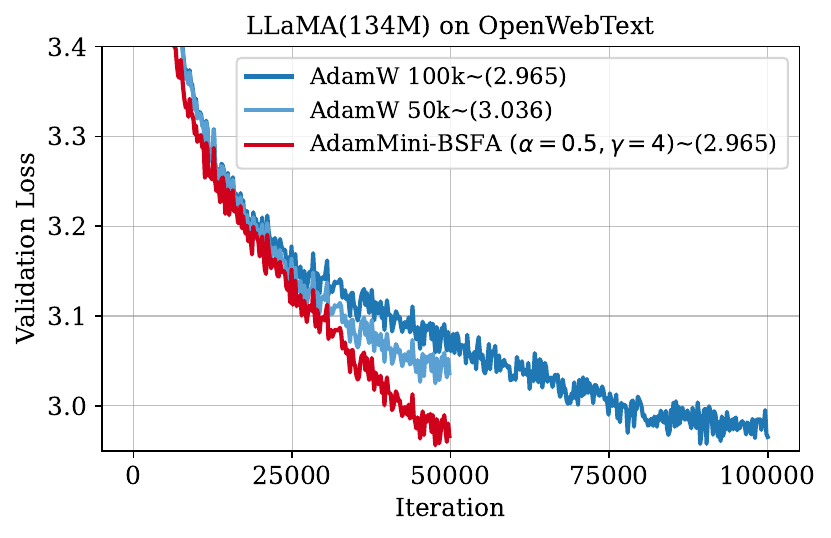}
    \caption{LLaMA(134M) OpenWebText}
    \label{fig:main_results_c}
  \end{subfigure}
  \caption{AdamMini--BSFA outperform AdamW baseline in ViT and LLaMA training.}
  \label{fig:main_results}
\end{figure*}

\begin{algorithm}[htbp]
      \caption{Block‐wise PCA-based Projector Estimator (\textbf{BPPE})}
      \label{algo:block_pca}
      \begin{algorithmic}[1]
        \State \textbf{Input:}
          Historical updates matrix $\boldsymbol{V} = [\boldsymbol{v}_1, \dots, \boldsymbol{v}_l]\in\mathbb R^{p\times l}$, 
          parameter blocks $B$, 
          principal components $k$, 
          domspace scaler $\alpha$, 
          bulkspace scaler $\gamma$
        \State Partition the parameter indices into $\{I_b\}_{b=1}^B$ via PyTorch default partition
        \For{$b$ in $B$}
          \If{LLM \& $b\in\{\text{embedding},\text{output}\}$}
            \State \textbf{continue}
          \EndIf
          \State $P^{(b)}_{\alpha,\gamma} \gets \textbf{PPE}\bigl(\boldsymbol{V} [I_b,:],\,k,\alpha,\gamma\bigr)$
        \EndFor
        \State $\displaystyle\mathcal P_{\alpha,\gamma}\gets\mathrm{blockdiag}\bigl(P^{(1)}_{\alpha,\gamma},\dots,P^{(B)}_{\alpha,\gamma}\bigr)$
        \State \textbf{Return:} $\mathcal P_{\alpha,\gamma}$
      \end{algorithmic}
    \end{algorithm}

\begin{figure}[H]
  \includegraphics[width=\columnwidth]{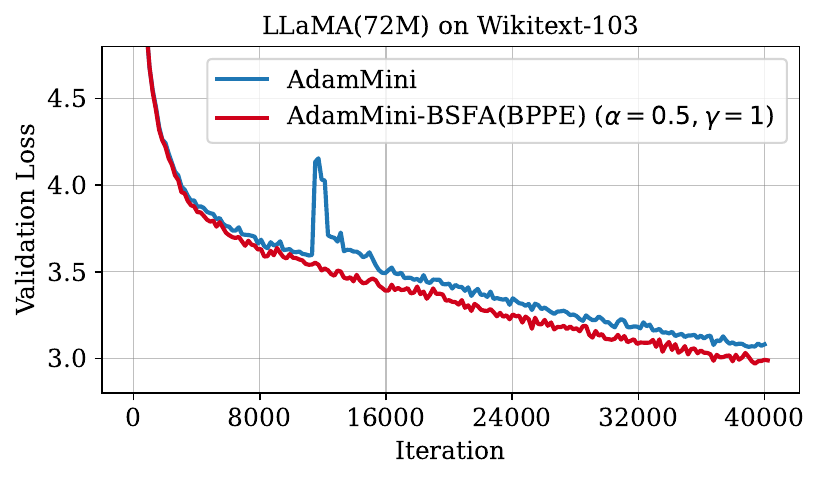}
    \caption{Testing BPPE on LLaMA.}
    \label{fig:bppe}
\end{figure}

This block-wise application via BPPE inherently reduces the memory footprint, improves cache efficiency, and enhances numerical stability and parallelism compared to a dense projector. These advantages collectively accelerate the overall projector estimation and critically improve BSFA's scalability. We evaluate BPPE in Figure~\ref{fig:bppe} by training LLaMA (72M) \citep{touvron2023llamaopenefficientfoundation} on WikiText-103 (detailed settings in Appendix~\ref{appn:part2}). Integrated into AdamMini-BSFA with dominant subspace updates moderated by $\alpha=0.5$, BPPE effectively mitigates training loss spikes and enhances stability, demonstrating its reliability in leveraging block-specific dominant subspace information for improved training dynamics.

\section{Experiments}
\label{sec:part3}

\subsection{Experiment settings}

For language tasks, we examine the performance of BSFA in two experiments: LLaMA-72M on WikiText-103 and LLaMA-134M on OpenWebText \citep{Gokaslan2019OpenWeb}. For vision tasks, we choose  ViT-Small \citep{dosovitskiy2020image} on ImageNet-1k \citep{deng2009imagenet}.

For all experiments, we use the default AdamW as the baseline optimizer, we follow the training protocols of nanoGPT and LLaMA for language tasks, with $\beta_1=0.9$, $\beta_2=0.95$, and weight decay $\lambda=0.1$. The learning rate is linearly warmed to \texttt{lr\_max} and decayed via a cosine scheduler. For each task, we tune \texttt{lr\_max} to optimize AdamW; further details are provided in Appendix~\ref{appn:part3}.

\paragraph{\textbf{BSFA Implementation}} 
We integrate BSFA into AdamMini under the same settings as above.
AdamMini is chosen primarily for two reasons: its ~2x memory reduction compared to AdamW accommodates BPPE's overhead, and its minimal block-wise second-moment statistics ensure a distinct operational mechanism that does not overlap with BSFA’s subspace adjustments, facilitating a clean integration. Note that AdamW’s baseline performance is comparable to or slightly better than AdamMini’s \citep{adam_mini_2024}. This context allows our setup with AdamMini to effectively showcase BSFA's acceleration capabilities. Moreover, because we use BPPE as the estimator, our method's average per-step time is similar to AdamW; Table~\ref{tab:bsfa_time} shows the time comparison when the update interval is $T=10$.

\begin{table}[htbp]
  \centering

  \setlength{\tabcolsep}{6pt} 

  \renewcommand{\arraystretch}{1.0} 

  \begin{tabular}{@{}lcc@{}}
    \toprule
    Experiment  & AdamW (/step) & BSFA (/step) \\
    \midrule
    ViT-S           & 0.868s & 1.02s \\
    LLaMA(72M)    & 0.386s & 0.422s \\
    LLaMA(134M)   & 1.86s  & 2.12s \\
    \bottomrule
  \end{tabular}

  \captionsetup{skip=10pt} 

  \caption{Comparison of average per-step wall-clock time (in seconds) on four RTX4090 GPUs. "BSFA" indicates the AdamMini-BSFA.}
  \label{tab:bsfa_time}
\end{table}

\subsection{Main Results}

In Figure \ref{fig:main_results}, we train ViT‐S on ImageNet-1k and compare a well‐tuned AdamW with AdamW‐BSFA $(\alpha=0.5,\gamma=3)$, 
and we evaluate LLaMA-72M on WikiText-103 and LLaMA-134M on OpenWebText, comparing well‐tuned AdaW with AdamMini‐BSFA $(\alpha=0.5,\gamma=4)$ and AdamMini‐BSFA $(\alpha=0.5,\gamma=2)$. Across all settings, for the same total number of training steps, BSFA consistently achieves lower training loss (and higher top-1 accuracy on ImageNet-1k) than vanilla AdamW or AdamMini. Moreover, BSFA reaches the performance of vanilla AdamW or AdamMini trained for twice as many steps, corresponding to a 2× acceleration.

\subsection{Ablation Study}

In the above experiments, BSFA with BPPE was applied across all major architectural blocks, categorized as \textbf{Norm}, \textbf{Attention}, and \textbf{MLP} blocks. Here, we conduct experiments to investigate the contribution of applying BPPE to different blocks towards acceleration and training stability. All ablations adhere to the BSFA configuration from Figure~\ref{fig:main_results_b}, with $(\alpha=0.5, \gamma=4)$. We apply BSFA solely to the selected blocks, while the remaining blocks continue with vanilla AdamW. As shown in Figure \ref{fig:ablation}, we derive the following two conclusions:

Result is shown in Figure \ref{fig:ablation}. We derive the following two conclusions:

\begin{itemize}[leftmargin=10pt]
    \item Norm layers play a crucial role in maintaining training stability. In Figure \ref{fig:ablation} (Top), applying BSFA to the norm layers does not accelerate training. However, compared to vanilla AdamW, no significant loss spike is observed. When BSFA is not applied to the norm layers, training diverges.
    \item Both Attention and MLP blocks contribute to training acceleration. In Figure \ref{fig:ablation}(Bottom), we compare the acceleration effects of applying BSFA to (Attention, Norm) and (MLP, Norm) blocks (with Norm layers ensuring training stability). Both configurations accelerate training relative to vanilla AdamW, achieving similar acceleration slightly inferior to block-wise BSFA applied to all blocks (Attention, MLP, Norm), indicating that both Attention and MLP blocks contribute comparably to accelerating the model.
\end{itemize}

{ 
\setlength{\textfloatsep}{3pt plus 2pt minus 2pt} 

\begin{figure}[H] 
  \centering
  \includegraphics[width=1\columnwidth]{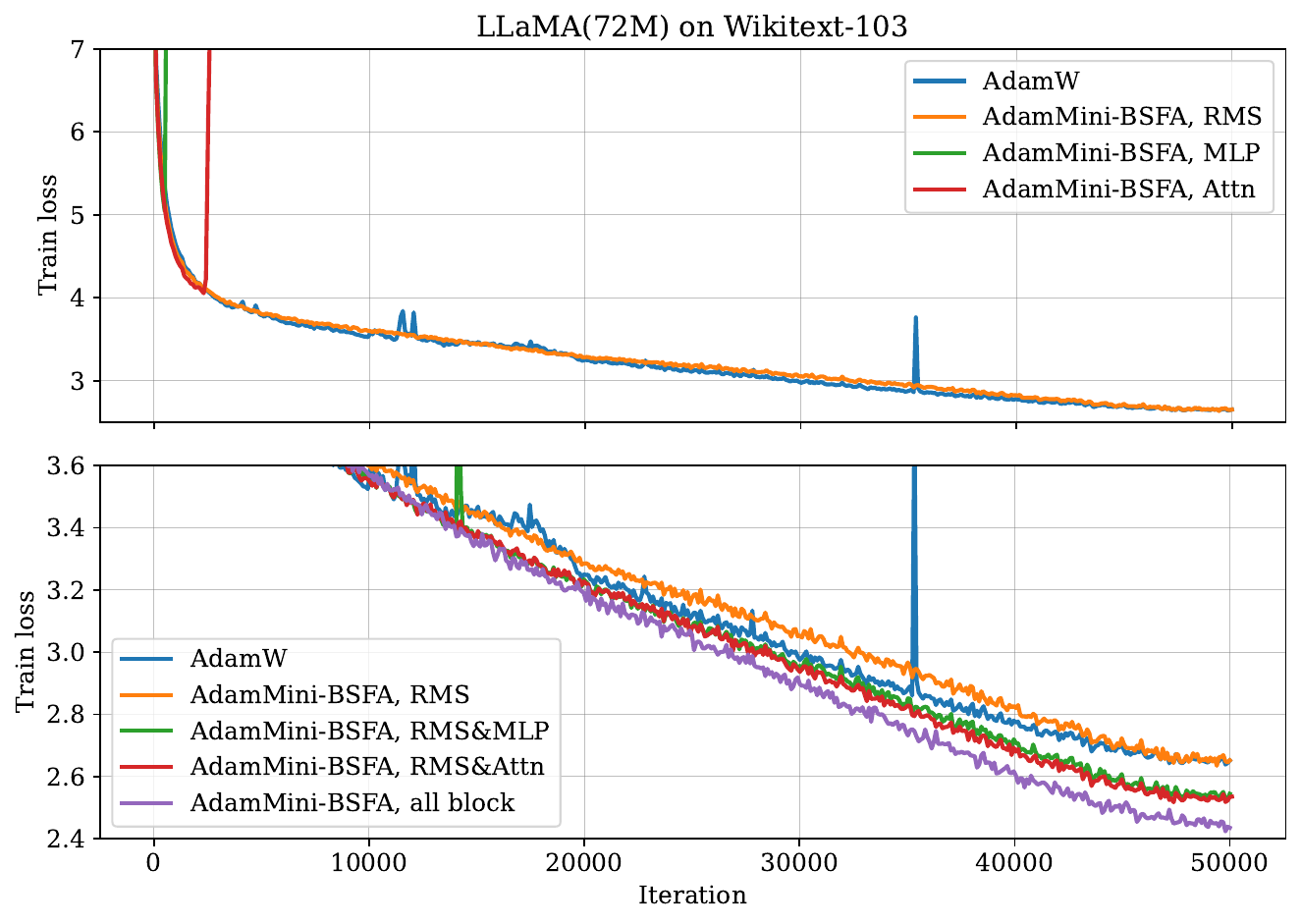}

  \captionsetup{aboveskip=4pt, belowskip=0pt} 

  \caption{ (Top) Diverge if not applying BSFA to Norm blocks. (Bottom)Apply BSFA  to either attention or MLP blocks alongside norm layers.}
  \label{fig:ablation} 
\end{figure}

} 

\subsection{Memory Footprint and Efficiency of BSFA}

\begin{table*}[t]
  \centering
  \begin{tabular}{@{}lcccccc@{}}
    \toprule
    Optimizer & micro\_BS & Memory & max\_iter & avg\_per\_step\_time & total\_time & final\_val\_loss \\
    \midrule
    AdamW           & 30 & 10.2GB & 100k & 386ms & \textasciitilde10.72h & 2.54 \\
    AdamW           & 60 & 17.6GB & 100k & 335ms & \textasciitilde9.32h  & 2.54 \\
    \textbf{BSFA} & \textbf{30} & \textbf{18.2GB} & \textbf{50k} & \textbf{422ms} & \textbf{\textasciitilde5.86h} & \textbf{2.49} \\
    \bottomrule
  \end{tabular}
  \caption{Comparison with a memory budget similar to AdamW with a doubled micro-batch size. "Memory" indicates peak memory usage per GPU}
  \label{tab:optimizer_time_compare}
\end{table*}

BSFA explicitly caches the top-$k$ principal directions to build its projector, incurring an additional memory overhead equivalent to storing $k$ full gradients. Because memory is a primary bottleneck in training, we examine: (i) whether the additional memory indeed translates into faster training, and (ii) can low-bit quantization reduce BSFA’s memory usage without degrading performance?

\paragraph{\textbf{The Memory-Performance Trade-off of BSFA}}

To assess whether the additional memory consumed by BSFA is justified by its performance benefits, We test whether BSFA’s extra memory is better spent on BSFA itself or on larger micro-batches. On LLaMA-72M, BSFA adds ~8 GB/GPU (storing $30 \times$ gradient-sized dominant directions), while increasing the micro-batch from 30 to 60 costs ~7.4 GB/GPU in activation memory. We compare their time overheads in Table \ref{tab:optimizer_time_compare}. This result demonstrates that in pre-training LLaMA-72M, allocating extra memory to BSFA is a more effective strategy for acceleration. A key insight is that while additional memory can be used to enlarge the micro-batch size, GPU utilization is already near saturation. Consequently, any performance gains from further increasing the micro-batch size become marginal.

\paragraph{\textbf{Potential for Memory Reduction: 4-bit BSFA}}

Although the memory-performance trade-off ostensibly favors BSFA, its auxiliary memory requirements may become prohibitive as model sizes continue to scale. To explore reducing BSFA’s memory footprint, we evaluate whether quantizing BSFA can lower memory usage without degrading performance. Following the exact protocol of Figure \ref{fig:main_results_b}, we implement a 4-bit variant that linearly quantizes BSFA’s historical gradients and projection matrix. Specifically, we apply 4-bit linear quantization to the historical gradients and the projection matrix required by BSFA. The per-step updates and the dominant directions are quantized using a \texttt{group\_size} of 64 and stored compactly via nibble-packing. For each group, the scale and zero-point are stored in the FP8 (E4M3) format. These values are only dequantized during the parameter update step, which significantly conserves memory. We perform a comparative study following the exact experimental setup of Figure \ref{fig:main_results_b}. The table below presents the validation loss for each optimizer at various training steps.

\begin{table}[htbp]
  \centering

  \setlength{\tabcolsep}{6pt} 

  \renewcommand{\arraystretch}{1.0} 

  \begin{tabular}{@{}lcc@{}}
    \toprule
    Optimizer  & Loss@50k & Peak Memory \\
    \midrule
    AdamW           & 2.77 & 10.2GB \\
    BSFA    & 2.49 & 18.2GB \\
    \textbf{4bit-BSFA}   & \textbf{2.54} & \textbf{11.5GB} \\
    \bottomrule
  \end{tabular}

  \captionsetup{skip=10pt} 

  \caption{Comparison of peak memory usage and final loss on four RTX4090 GPUs. "BSFA" indicates the AdamMini with BSFA, "4bit-BSFA" indicates the AdamMini with 4bit-BSFA, "Peak Memory" indicates peak memory usage per GPU.}
  \label{tab:bsfa_memory_foot}
\end{table}

The results below show that 4-bit BSFA achieves performance comparable to the full-precision version while reducing the additional memory overhead from $8.0$ GB to just $1.3$ GB, approximately $1/6$ of its initial value. This result shows that quantizing BSFA to 4 bits is an effective memory-reduction strategy that preserves training quality, providing a practical path to further improve BSFA’s memory efficiency. We emphasize that the core contribution of this work is the insightful acceleration strategy derived from our analysis of the loss landscape, leaving the development of more practical, lightweight variants for future work. 
\section{Conclusion}
\label{sec:conclusion}

In this paper, we introduced the Bulk-Space-Filtration-Accelerator (BSFA), a novel framework that exploits the distinct roles of dominant and bulk subspaces in neural network training. By differentially scaling updates within these subspaces—moderating the dominant for stability and amplifying the bulk for speed—and leveraging an efficient PCA-based estimator with a block-wise strategy for scalability, BSFA achieves significant training acceleration, notably an approximate 2$\times$ speedup for large Transformer models like LLaMA and ViT compared to AdamW.

\section*{Limitations}

Although BSFA demonstrates promising acceleration capabilities, it has several areas for potential improvement. First, the PCA estimator BPPE demands significant GPU memory, and reducing its memory requirements remains challenging. Furthermore, estimating fewer dominant directions to save memory could compromise the accuracy of the Projector, consequently diminishing the acceleration benefits. Second, our current approach assigns an equal number of dominant directions to each block, which may not optimally accommodate the heterogeneous properties of different blocks. Future work could focus on mitigating these memory constraints and developing more adaptive strategies for block-specific control. In this work, we use up to 4 RTX4090 to train for less than 500 hours, including hyperparameter tuning. We wish this experience could boost the understanding of efficient training in the community.

\section*{Ethical Considerations}

This paper seeks to promote the development of deep learning by focusing on understanding and improving the training processes of neural networks, with particular emphasis on large language models (LLMs). While our work has the potential for broad societal impact, we do not, at present, identify any specific societal implications that warrant special attention.

\section*{Acknowledgements}

This work was supported by the Strategic Priority Research Program of the Chinese Academy of Sciences (Grant No. XDB0680101), CAS Project for Young Scientists in Basic Research under Grant No. YSBR-034.

\bibliography{main}

\appendix
\section{Experiment Details}
\label{appn:experiment_setting}

\subsection{Experiment Details in Section~\ref{sec:part1}}
\label{appn:par1}
\textbf{Figures~\ref{fig:minhak} \& \ref{fig:demonstra}: Transformer on SST-2.} We conduct illustrative experiments on a two-layer Transformer (hidden dimension $64$, $8$ attention heads) following \citet{damian2022self} and \citet{song2024does}, training on SST2 \citep{socher2013recursive} for binary sentiment classification with cross-entropy loss. All runs use SGD with constant learning rate $\eta=0.03$ and batch size $200$, and stop when training accuracy reaches $0.99$.
Training is performed on a single NVIDIA RTX 4090 GPU.

\subsection{Experiment Details in Section~\ref{sec:part2}}
\label{appn:part2}
\textbf{Figures~\ref{fig:cifar10} \& \ref{fig:cifar100}: CNNs on CIFAR-10/100.} We build on the official PyTorch tutorial, applying random horizontal flips and $32\times32$ crops with $4$-pixel reflection padding, and normalize inputs to mean $0.5$. For SGDM we use initial learning rate $\alpha_0=0.1$, momentum $0.9$, weight decay $5\times10^{-4}$, batch size $1024$, and train for $200$ or $50$ epochs with cosine decay. For BSFA we adopt LPE with $k=10$ (CIFAR-10) or $k=100$ (CIFAR-100), historical updates length $l=k+10$, $T=10$, train for $50$ epochs, and tune $\alpha\in\{1,0.5,0.2\}$, $\gamma\in\{2,4,6\}$ via grid search. We avoid selecting \(\alpha\) values approaching zero, since imprecise projector estimates would unduly attenuate updates in other directions. All experiments run on a single NVIDIA RTX 4090 GPU.

\textbf{Figures~\ref{fig:pca_var} \& \ref{fig:combined}: Transformer on SST-2.} We follow exactly the setup of Section~\ref{appn:par1}.

\textbf{Figure~\ref{fig:bppe}: LLaMA(72M) on Wikitext-103.} We evaluate BSFA on LLaMA \citep{touvron2023llamaopenefficientfoundation}, a decode-only Transformer with RoPE \citep{su2024roformer}, SwiGLU \citep{shazeer2020glu}, and RMSNorm \citep{zhang2019root}, pre-trained on Wikitext-103 ($103$M tokens, $28$K articles). The $72$M-parameter model has $16$ layers, $10$ heads, hidden size $410$, sequence length $150$, batch size $240$. We use AdamMini \citep{adam_mini_2024} with $(\beta_1,\beta_2,\lambda)=(0.9,0.95,0.1)$, learning rate $\eta=5\times10^{-4}$, total steps $40\,000$, $500$-step warm-up to \texttt{lr\_max} then cosine decay to \texttt{lr\_max}/20, and gradient clipping $1.0$. AdamMini and AdamMini‐BSFA share this schedule. For AdamMini‐BSFA, BPPE estimates a rank-$30$ subspace using $l=40$, $T=10$, and we set $\alpha=0.5, \gamma=1.0$ to mitigate loss spikes. All LLaMA experiments run on four NVIDIA RTX 4090 GPUs.

\subsection{Experiment Details in Section~\ref{sec:part3}}
\label{appn:part3}
\textbf{Figure~\ref{fig:main_results_a}: ViT‐S/16 on ImageNet‐1k.} We evaluate BSFA on training a ViT‐S/16 model on ImageNet‐1k. We adopt the experimental protocol established by \citet{beyer2022betterplainvitbaselines}. Specifically, we adopt the original \texttt{timm} implementation and apply RandAugment and Mixup (level $10$, probability $0.2$). The default optimizer is AdamMini with hyperparameters $\beta_1 = 0.9$, $\beta_2 = 0.999$, and weight decay $\lambda = 10^{-4}$ \citep{beyer2022betterplainvitbaselines}. We set the batch size to $1024$ and train for $90$ epochs (or $45$ epochs for AdamW‐BSFA), each run including $10{,}000$ warm‐up steps. The learning‐rate schedule comprises a linear warm‐up to \texttt{lr\_max} = $10^{-3}$ followed by cosine decay. For AdamW‐BSFA, we retain the same configuration and train for $45$ epochs. We use BPPE to estimate the dominant subspace with $k=50$, $l=60$, and $T=10$, and tune $\alpha\in\{1,0.5,0.2\}$, $\gamma\in\{2,4,6\}$ via grid search. All ViT‐S experiments run on a single NVIDIA RTX 4090 GPU.

\textbf{Figure~\ref{fig:main_results_b}: LLaMA(72M) on Wikitext‐103.} In this experiment we evaluate the acceleration of BSFA when applied to AdamMini. The model and task are identical to Appendix~\ref{appn:part2}. We optimize with AdamW using $\beta_1 = 0.9$, $\beta_2 = 0.95$, and weight decay $\lambda = 0.1$, and run for $50{,}000$ or $100{,}000$ steps. The learning‐rate schedule includes a $500$‐step warm‐up followed by cosine decay from \texttt{lr\_max} to \texttt{lr\_min} = \texttt{lr\_max}/20, with gradient clipping at norm $1.0$. We tune \texttt{lr\_max} for AdamW over $\{2\times10^{-4},\,4\times10^{-4},\,6\times10^{-4},\,8\times10^{-3},\,1.0\times10^{-3}\}$ as shown in Figure~\ref{fig:tuning72M}.

\begin{figure}[h]
  \includegraphics[width=\columnwidth]{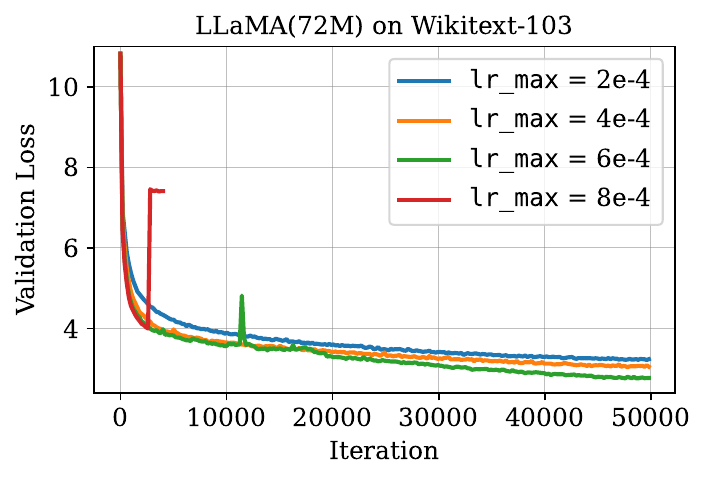}
  \caption{Tuning \texttt{lr\_max} for LLaMA(72M) on Wikitext-103.}
    \label{fig:tuning72M}
\end{figure}

\begin{table*}[t]
  \centering
  \begin{tabular}{@{}lccccc@{}}
    \toprule
    Network\&Task & $k$ & $l$ & $||\mathcal{D}||$ & LPE Time & \textbf{PPE Time} \\
    \midrule
    Transformer\&SST2       & 2  & 10  & 1000 & 10.28s  & \textbf{0.12s ($\bm{\downarrow~98.83\%}$)} \\
    Resnet18\&CIFAR10       & 10 & 20  & 5000 & 162.1s  & \textbf{0.42s ($\bm{\downarrow~99.74\%}$)} \\
    ViT-Tiny\&ImageNet-1k   & 20 & 50  & 5000 & 2002s   & \textbf{3.27s ($\bm{\downarrow~99.84\%}$)} \\
    \bottomrule
  \end{tabular}
  \caption{Average wall-clock time (in seconds) on a single RTX4090. Here, \( k \) denotes the number of eigenvectors(For both algorithms), \( l \) indicates the number of historical gradients(For PCA estimator), and \(|| \mathcal{D}|| \) represents the size of the abridged dataset(For Lanczos estimator).}
  \label{tab:time_compare}
\end{table*}

For AdamMini we follow the partitioning of \citet{adam_mini_2024}; Figure~\ref{fig:main_results_b} compares AdamMini and AdamW under identical settings, indicating similar performance with AdamMini slightly behind. AdamW and AdamW‐BSFA share the same learning‐rate schedule. For AdamW‐BSFA, we use BPPE with $k=50$, $l=60$, and $T=10$, and tune $\alpha\in\{1,0.5,0.2\}$, $\gamma\in\{3,4,6\}$. Representative hyperparameter pairs are displayed in Figure~\ref{fig:tuningeff72M}. All LLaMA experiments use the Huggingface LLaMA implementation on four NVIDIA RTX 4090 GPUs.

\begin{figure}[h]
  \includegraphics[width=\columnwidth]{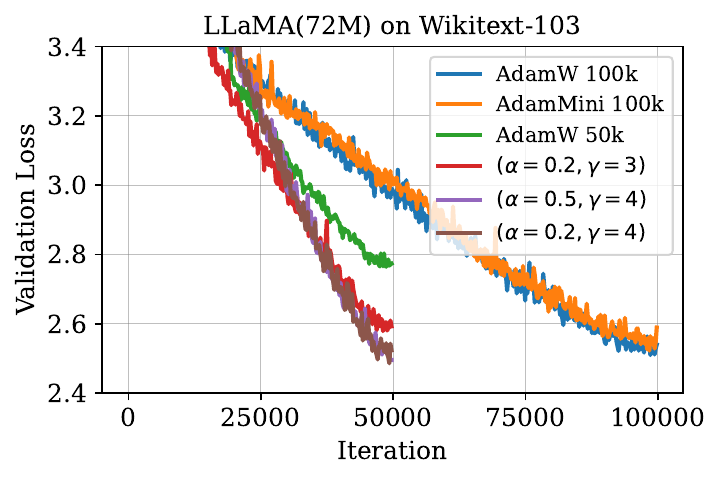}
  \caption{Tuning hyperparameter for AdamMini-BSFA.}
    \label{fig:tuningeff72M}
\end{figure}

\textbf{Figure~\ref{fig:main_results_c}: LLaMA(134M) on OpenWebText.} We evaluate BSFA on a 6-layer LLaMA(134M) model (16 heads per layer, hidden size 768) trained on the OpenWebText corpus, with maximum sequence length 1,024 and batch size 480 (following nanoGPT codebase\footnote{https://github.com/karpathy/nanoGPT} and \citet{liu2023sophia}). We optimize with AdamMini using the same hyperparameters of AdamW, where$(\beta_1,\beta_2,\lambda)=(0.9,0.95,0.1)$, run for 50,000 or 100,000 steps (including 1,000 warm-up steps), and apply gradient clipping (norm 1.0). The learning-rate schedule consists of a 1 000-step linear warm-up to \texttt{lr\_max} followed by cosine decay to \texttt{lr\_min} = \texttt{lr\_max}/20; we tune \texttt{lr\_max} over $\{3\times10^{-4},\,6\times10^{-4},\,1.2\times10^{-3},\,1.8\times10^{-3}\}$ and identify $6\times10^{-4}$ as optimal (The result is similar to Figure~\ref{fig:tuning72M}, so we don't show them again). AdamMini and AdamMini-BSFA share this schedule. For AdamMini-BSFA, BPPE estimates the dominant subspace with rank $k=50$, $l=60$ historical updates and interval $T=10$, and we grid-search $\alpha\in\{1,0.5,0.2\}$ and $\gamma\in\{2,4,6,8\}$ (The result is similar to Figure~\ref{fig:tuningeff72M}, so we don't show them again). All experiments use the Huggingface LLaMA codebase on four NVIDIA RTX 4090 GPUs.

\section{Details on Dominant Subspace Estimation}
\label{appen:dominant_estimation}

\subsection{Hessian Eigenspectrum Estimation via Lanczos Method}
\label{appen:lanczos}
Lanczos method~\citep{Pearlmutter1994,Papyan2018TheFS,pmlr-v97-ghorbani19b} is a common strategy for estimating the top-$k$ eigenpairs of the Hessian $\boldsymbol H$, which we refresh every $T$ optimizer steps to control overhead. In practice, this involves two phases:

\begin{enumerate}[leftmargin=10pt]
  \item \textbf{HVP construction.} On a small “abridged” dataset perform one forward pass and two backward passes to define the Hessian–vector product (HVP) operator $\boldsymbol v\mapsto\boldsymbol H\boldsymbol v$.
  \item \textbf{Lanczos iterations.} Starting from a random unit vector $q_1$, apply the HVP operator and orthogonalize against previous basis vectors to build a tridiagonal projection. Specifically, for $k=1,2,\dots$; do:
    \begin{align*}
      \tilde v_k &= \boldsymbol H\,q_k,\\
      \alpha_k    &= q_k^\top \tilde v_k,\\
      r_k         &= \tilde v_k - \alpha_k\,q_k - \beta_{k-1}\,q_{k-1},\\
      \beta_k     &= \|r_k\|,\\
      q_{k+1}     &= r_k / \beta_k.
    \end{align*}
\end{enumerate}

The resulting tridiagonal matrix can be diagonalized cheaply to yield approximations $\{\lambda_i,u_i\}_{i=1}^k$. Based on this, we propose the Lanczos-based Projector Estimator (LPE), summarized in Algorithm~\ref{algo:lanczos}.

\begin{algorithm}[H]
\caption{Lanczos-based Projector Estimator (\textbf{LPE})}
\label{algo:lanczos}
\begin{algorithmic}[1]
\State \textbf{Input:}
Neural network function \( \mathcal{F} \),Number of eigenvalues \( k \), Abridged Dataset \( \mathcal{D} \), Domspace scaler $\alpha$, Bulkspace scaler $\gamma$
\State \( \{\lambda_i\}_{i=1}^k, \{u_i\}_{i=1}^k \gets \text{Lanczos}( \mathcal{F}, \mathcal{D}, k) \)
\State $\mathcal{P}_{\alpha,\gamma} \gets \alpha\sum_{i=1}^{k} \boldsymbol{u}_i \boldsymbol{u}_i^{\top} + \gamma (\boldsymbol{I} - \sum_{i=1}^{k} \boldsymbol{u}_i \boldsymbol{u}_i^{\top})$
\State \textbf{Return:} \(\mathcal{P}_{\alpha,\gamma}\)
\end{algorithmic}
\end{algorithm}

\subsection{Wall-time Comparison of PPE and LPE}
\label{appen:time_compare}
We compare the runtimes of the PPE and LPE in Table~\ref{tab:time_compare}, which shows that the PPE's computation time is negligible compared to the LPE's. This is because the PPE does not involve the forward or backward propagation of the neural network.


\newpage
\onecolumn
\section{Proof of Proposition~\ref{prop:pca}}
\label{appen:proof}

First, we restate our proposition for readability.

\begin{proposition}[Gradient–PCA recovers the top eigenspace]
\label{prop:gd_pca_fixed}
Let $A\in\mathbb{R}^{d\times d}$ be symmetric and positive semi–definite with distinct
eigenvalues
\[
  \lambda_1>\lambda_2>\dots>\lambda_k>\lambda_{k+1}=\lambda_{k+2}=\dots=\lambda_d\;(\eqqcolon\lambda_{\mathrm{tail}}\ge 0),
\]
and corresponding orthonormal eigenvectors $v_1,\dots,v_d$.
Define the target subspace
$E_k=\mathrm{span}\{v_1,\dots,v_k\}$.

Choose a stepsize $\eta>0$ that satisfies
\begin{equation}
\label{eq:stepsize}
  \eta\lambda_k>1,\qquad
  0<\eta\lambda_{\mathrm{tail}}<1,\qquad
  \eta(\lambda_k+\lambda_{\mathrm{tail}})>2 .
\end{equation}

Run gradient descent on the quadratic
$f(x)=\tfrac12x^\top A x$ from an initial point
$x_0=\sum_{j=1}^d c_j v_j$ with $c_j\neq 0$ for $1\le j\le k$:
\[
  x_{s+1}=x_s-\eta A x_s,\qquad s=0,1,2,\dots
\]
and denote the gradients $g_s=\nabla f(x_s)=A x_s$.
For an integer window length $l\ge k$ form the data matrix
\[
  G_t=[\,g_t,\;g_{t+1},\;\dots,\;g_{t+l-1}\,]\in\mathbb{R}^{d\times l},\qquad t=0,1,2,\dots
\]

Then, as $t\to\infty$, the $k$ leading left singular vectors of
$G_t$ (equivalently, the top-$k$ eigenvectors of $G_tG_t^\top$)
converge to an orthonormal basis of $E_k$. For simplicity, we consider this uncentered case, i.e., a direct SVD analysis of the gradient matrix; we believe this conclusion can be naturally transferred to the centered version.
\end{proposition}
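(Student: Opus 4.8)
The plan is to diagonalize the dynamics and track how the gradient vectors behave coordinate-by-coordinate in the eigenbasis $\{v_1,\dots,v_d\}$. Writing $x_s=\sum_{j=1}^d c_j(s)v_j$, the iteration $x_{s+1}=x_s-\eta A x_s$ decouples into $c_j(s+1)=(1-\eta\lambda_j)c_j(s)$, so $c_j(s)=(1-\eta\lambda_j)^s c_j$, and therefore $g_s=Ax_s=\sum_j \lambda_j(1-\eta\lambda_j)^s c_j v_j$. Set $\mu_j\eqqcolon 1-\eta\lambda_j$. The stepsize conditions \eqref{eq:stepsize} are exactly engineered so that $|\mu_j|>1$ for $j\le k$ (from $\eta\lambda_k>1$, noting $\eta\lambda_j\geq\eta\lambda_k$ and we also need $1-\eta\lambda_j<-1$, i.e. $\eta\lambda_j>2$ — here the third condition $\eta(\lambda_k+\lambda_{\mathrm{tail}})>2$ together with $\eta\lambda_{\mathrm{tail}}<1$ forces $\eta\lambda_k>1$, and one should check the intended regime is $\mu_j<-1$ for the top modes), while $|\mu_j|=|1-\eta\lambda_{\mathrm{tail}}|<1$ for $j>k$. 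So the top-$k$ eigencoordinates of $g_s$ grow geometrically (with oscillating sign, since $\mu_j<0$) while the tail coordinates decay geometrically. This is the ``oscillatory dominance'' that PCA will pick up.

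The core computation is then to understand the Gram matrix $G_tG_t^\top=\sum_{s=t}^{t+l-1} g_s g_s^\top$. In the eigenbasis, $G_tG_t^\top = \sum_{j,j'} \lambda_j\lambda_{j'} c_j c_{j'}\Big(\sum_{s=t}^{t+l-1}\mu_j^s\mu_{j'}^s\Big) v_j v_{j'}^\top$, and the inner sum is a finite geometric series equal to $\mu_j^t\mu_{j'}^t\cdot\frac{(\mu_j\mu_{j'})^l-1}{\mu_j\mu_{j'}-1}$ (treating the degenerate case $\mu_j\mu_{j'}=1$ separately — it cannot occur among the top modes since $\mu_j\mu_{j'}>1$ there). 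The plan is to factor out the dominant scale. Let $D_t=\mathrm{diag}(\mu_1^t,\dots,\mu_k^t)$ on the top block; then the top-left $k\times k$ block of $G_tG_t^\top$, expressed in the $\{v_1,\dots,v_k\}$ coordinates, equals $D_t M D_t$ where $M$ is the fixed positive-definite matrix with entries $M_{jj'}=\lambda_j\lambda_{j'}c_jc_{j'}\frac{(\mu_j\mu_{j'})^l-1}{\mu_j\mu_{j'}-1}$ (positive definiteness follows because $M = C V^\top V C$ for an appropriate $l$-column matrix $V$ of the scaled geometric sequences, which has rank $k$ when $l\geq k$ and the $\mu_j$ are distinct — a Vandermonde-type argument). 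The cross block (top $\times$ tail) and the tail block are smaller: the cross entries carry a factor $\mu_j^t\mu_{\mathrm{tail}}^t$ with $|\mu_j\mu_{\mathrm{tail}}|$ possibly $\gtrless 1$, so one normalizes by $\min_j|\mu_j|^{2t}$; after dividing $G_tG_t^\top$ by $\min_j\mu_j^{2t}$ the top block stays bounded below (its smallest eigenvalue is $\gtrsim \lambda_{\min}(M)\cdot(\mu_k/\mu_{\min})^{2t}$ which is bounded away from $0$ only along the slowest-growing top direction — so one instead normalizes mode-by-mode, i.e. works with $D_t^{-1}(G_tG_t^\top)_{\text{top}}D_t^{-1}=M$), while the cross and tail blocks, after the same mode-wise rescaling on the top indices and leaving tail indices alone, vanish as $t\to\infty$ because each tail coordinate contributes $\mu_{\mathrm{tail}}^{2t}\to0$ relative to every top mode. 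The conclusion is that, after this rescaling, $G_tG_t^\top$ converges to a matrix supported entirely on $E_k$ with a nondegenerate (rank-$k$) top block.

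Finally I would invoke a Davis–Kahan / eigenvector-perturbation argument: the top-$k$ eigenspace of $G_tG_t^\top$ coincides with that of the rescaled matrix (rescaling the top modes by a positive diagonal congruence does not change the invariant subspace spanned by $v_1,\dots,v_k$, since that block is preserved and the off-block terms are a vanishing perturbation). As the perturbation magnitude $\to0$ and the limiting matrix has a spectral gap between its $k$ positive eigenvalues (on $E_k$) and $0$ (on $E_k^\perp$), the top-$k$ invariant subspace of $G_tG_t^\top$ converges to $E_k$; the leading left singular vectors of $G_t$ are exactly the eigenvectors of $G_tG_t^\top$. I would handle the sign oscillation of $\mu_j$ by noting that $G_tG_t^\top$ only ever sees $\mu_j^s\mu_{j'}^s$, so signs only affect the fixed matrix $M$, not the asymptotics. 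The main obstacle is bookkeeping the relative magnitudes: the top modes grow at \emph{different} rates $|\mu_1|>\dots>|\mu_k|$, so a single scalar normalization cannot simultaneously keep all of them $\Theta(1)$; the clean fix is the diagonal congruence $D_t^{-1}(\cdot)D_t^{-1}$ on the top block, and the delicate point is checking that the cross-terms between a slow top mode ($\mu_k$) and the tail ($\mu_{\mathrm{tail}}$) still vanish after this asymmetric rescaling — which they do precisely because $|\mu_k\mu_{\mathrm{tail}}^{-1}|\cdot|\mu_{\mathrm{tail}}|^{2} = |\mu_k\mu_{\mathrm{tail}}|$ and one needs $|\mu_{\mathrm{tail}}|<1$ (the second stepsize condition) to push the cross block, normalized on the top side by $\mu_k^{-t}$ and not at all on the tail side, down like $|\mu_{\mathrm{tail}}/\mu_k|^{t}\to0$. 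I would also note the degenerate tail (all trailing eigenvalues equal) is what makes the tail block a clean scalar multiple of a projection and avoids any further case analysis there.
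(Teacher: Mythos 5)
Your overall route --- diagonalize the GD dynamics in the eigenbasis of $A$, express $G_tG_t^\top$ in that basis as a structured matrix, and invoke Davis--Kahan --- is exactly the paper's approach, and your Vandermonde argument for the positive-definiteness of the fixed $k\times k$ matrix $M$ is a detail the paper actually glosses over. However the central step has a genuine gap. Your plan is to perform the congruence $\tilde M^{(t)}=P\,M^{(t)}\,P^\top$ with $P=\operatorname{diag}(D_t^{-1},I)$, observe $\tilde M^{(t)}\to\operatorname{blockdiag}(M,0)$, apply Davis--Kahan to $\tilde M^{(t)}$, and then transfer the conclusion back to $G_tG_t^\top$. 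The transfer fails: congruence is not similarity and does not preserve eigenspaces. The parenthetical ``does not change the invariant subspace spanned by $v_1,\dots,v_k$'' is a statement about the \emph{limit}, not about the tilted top-$k$ eigenspace of $M^{(t)}$ at finite $t$. A $2\times2$ check makes this concrete: with top entry $a_t$ and cross entry $b_t$, the top eigenvector of $M^{(t)}$ tilts by $\approx b_t/a_t$ while that of the rescaled matrix tilts by $\approx b_t/\sqrt{a_t}$ --- different orders, so you cannot read off the former from the latter. The fix is simply to drop the rescaling and apply Davis--Kahan directly to $M^{(t)}$, taking the reference matrix to be $\operatorname{blockdiag}(M_{11}^{(t)},0)$: the eigen-gap of the reference is $\sigma_{\min}(D_t M D_t)\ge\sigma_{\min}(M)\,|\mu_k|^{2t}$ (this is precisely where your Vandermonde lemma earns its keep), the off-diagonal-plus-tail perturbation has norm $O\bigl(|\mu_k|^{\,t}\mu_{k+1}^{\,t}\bigr)$, and the ratio is $O\bigl((\mu_{k+1}/|\mu_k|)^t\bigr)\to0$ by the third stepsize condition. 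This is the paper's bound, and it is the correct rate for the actual subspace in question (your rescaled route would yield $\mu_{\mathrm{tail}}^{\,t}$, which belongs to a different matrix).

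A smaller issue, which you half-flag yourself: the stepsize conditions do \emph{not} imply $|\mu_j|>1$ for $j\le k$, so the ``top modes grow geometrically'' picture is not guaranteed. For instance $\eta\lambda_k=1.1$ and $\eta\lambda_{\mathrm{tail}}=0.95$ satisfies all three conditions yet gives $|\mu_k|=0.1$ and $\mu_{k+1}=0.05$: the top modes decay too, just more slowly than the tail. What the conditions deliver is only $\mu_j<0$ for $j\le k$ and $|\mu_k|>\mu_{k+1}$, and the formal proof needs just the ratio $\mu_{k+1}/|\mu_k|<1$. So this is a narrative rather than a fatal error, but the intuition should be ``relatively dominant,'' not ``growing.''
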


\begin{proof}

\textbf{1.  Closed forms.}
Write $x_0=\sum_{j=1}^d c_j v_j$.
Since $A v_j=\lambda_j v_j$, one step of gradient descent gives
$x_{s+1}=(I-\eta A)x_s$ then we have
\[
  x_s=(I-\eta A)^s x_0=\sum_{j=1}^d c_j \mu_j^{\,s} v_j,
\]
where $\mu_j\coloneqq 1-\eta\lambda_j$. This implies
\[
  g_s=A x_s=\sum_{j=1}^d \alpha_j\mu_j^{\,s} v_j,
\]
with $\alpha_j\coloneqq c_j\lambda_j$.

\textbf{2.  Properties of $\mu_j$.}
From the stepsize conditions \eqref{eq:stepsize}, we have
\[
  \mu_1<\mu_2<\dots<\mu_k<0,\qquad
  0<\mu_{k+1}=\dots=\mu_d<1.
\]
Since $\mu_k < 0$, we have $|\mu_k| = \eta\lambda_k - 1 > 0$. 
Since $\mu_{k+1} > 0$, we have $|\mu_{k+1}| = \mu_{k+1} = 1 - \eta\lambda_{\mathrm{tail}}$.

Critically, from the third condition $\eta(\lambda_k+\lambda_{\mathrm{tail}})>2$, we can derive:
\[
  \eta\lambda_k - 1 > 1 - \eta\lambda_{\mathrm{tail}}
\]
which means $|\mu_k| > \mu_{k+1}$. Combined with the order of $\mu_j$ values, we have:
\[
  |\mu_1|>|\mu_2|>\dots>|\mu_k|>\mu_{k+1}=|\mu_{k+1}|=\dots=|\mu_d|.
\]
This ensures that $|\mu_{k+1}|/|\mu_k|<1$, which is essential for convergence.

\textbf{3.  Factorization of $G_t$.}
Define $w_j\in\mathbb{R}^l$ and 
\(w_j=(1,\mu_j,\dots,\mu_j^{\,l-1})^\top\).
Then
\[
  G_t=\sum_{j=1}^d \alpha_j \mu_j^{\,t}\, v_j w_j^\top
      =V \Sigma_t W^\top,
\]
where $V=[v_1,\ldots,v_d]$, $\Sigma_t=\operatorname{diag}(\alpha_j\mu_j^{\,t})_{j=1}^d$, and 
$W=[w_1,\ldots,w_d]$. Consequently
\[
  G_tG_t^\top = V M^{(t)} V^\top,\quad\text{where}\quad
  M^{(t)} = \bigl(\alpha_i\alpha_j(\mu_i\mu_j)^{t}(w_i^\top w_j)\bigr)_{i,j=1}^d.
\]

\textbf{4.  Analysis of $M^{(t)}$.}
Write $M^{(t)}$ in block form
\[
  M^{(t)} = 
  \begin{pmatrix}
     M_{11}^{(t)} & M_{12}^{(t)}\\
     M_{12}^{(t)\top} & M_{22}^{(t)}
  \end{pmatrix}
\]
where $M_{11}^{(t)}\in\mathbb{R}^{k\times k}$. We can establish the following bounds:
\[
  \|M_{11}^{(t)}\|_2 \ge C_1 |\mu_k|^{2t},
\]
\[
  \|M_{12}^{(t)}\|_2 \le C_2 |\mu_k|^{t}\mu_{k+1}^{\,t},
\]
\[
  \|M_{22}^{(t)}\|_2 \le C_3 \mu_{k+1}^{\,2t},
\]
where $C_1, C_2, C_3 > 0$ are constants that depend only on $\{c_j,\lambda_j,l\}_{j=1}^d$. Specifically, $C_1$ depends on the minimum of $|\alpha_j|^2$ for $j\leq k$ and the inner products of the geometric progression vectors, while $C_2$ and $C_3$ depend on the maximum values of $|\alpha_i\alpha_j|$ and the corresponding inner products.

From these bounds and using eigenvalue perturbation theory, the eigen-gap between the $k$th and $(k+1)$th eigenvalues of $M^{(t)}$ satisfies:
\[
  \sigma_k(M^{(t)})-\sigma_{k+1}(M^{(t)}) \ge C_4 |\mu_k|^{2t}
\]
for some constant $C_4 > 0$, while
\[
  \|M_{12}^{(t)}\|_2 = O\bigl(|\mu_k|^{t}\mu_{k+1}^{\,t}\bigr).
\]

\textbf{5. Subspace Convergence via Davis--Kahan Theorem}
To establish the convergence of the computed subspace to $E_k$, we employ the Davis--Kahan $\sin\Theta$ theorem. This theorem bounds the difference between the subspace $S_t$ (derived from $M^{(t)}$) and the target subspace $E_k$, using the spectral properties of $M^{(t)}$:
\[
  \|\!\sin\Theta(S_t,E_k)\|_2
  \;\le\;
  \frac{\|M_{12}^{(t)}\|_2}
       {\sigma_k(M^{(t)})-\sigma_{k+1}(M^{(t)})}
  \;=\;
  O\!\Bigl(\bigl|\tfrac{\mu_{k+1}}{\mu_k}\bigr|^{\!t}\Bigr)
  \;\xrightarrow[t\to\infty]{}\;0,
\]
where $S_t$ denotes the span of the top-$k$ eigenvectors of
$M^{(t)}$ (equivalently of $G_tG_t^\top$).
Because \(V\) is orthogonal, the corresponding subspace in the
original coordinates is generated by the first $k$ left singular
vectors of \(G_t\).
Therefore \(S_t\to E_k\) as claimed.
\end{proof}

\end{document}